\documentclass[runningheads]{llncs}

\usepackage{eccv}

\usepackage{eccvabbrv}

\usepackage{graphicx}
\usepackage{booktabs}

\usepackage{amssymb}
\usepackage{pifont}  

\usepackage{multirow}
\usepackage{subcaption} 

\usepackage[accsupp]{axessibility}  

\usepackage{hyperref}

\usepackage{orcidlink}

\begin{document}

\title{OmniDiT: Extending Diffusion Transformer to Omni-VTON Framework} 

\titlerunning{OmniDiT}

\author{Weixuan Zeng\inst{1\dag} \and
Pengcheng Wei\inst{2\dag}  \and
Huaiqing Wang\inst{3} \and
Boheng Zhang\inst{3} \and
Jia Sun\inst{3}\and
Dewen Fan\inst{3}\and
Lin HE\inst{3}\and
Long Chen\inst{3}\and
Qianqian Gan\inst{3}\and
Fan Yang\inst{3}\and
Tingting Gao\inst{3}}

\authorrunning{~Zeng et al.}

\institute{The Chinese University of Hong Kong, ShenZhen \and
Beihang University \and
KuaiShou \\
\email{weixuanzeng@link.cuhk.edu.cn\text{,}  22373151@buaa.edu.cn}\\
\email{\texttt{\{wanghuaiqing,yangxiao16,sunjia05,fandewen, \\
helin05,chenlong11,ganqianqian,yangfan,lisize\}@kuaishou.com}}\\
}

\maketitle

\let\thefootnote\relax\footnotetext{$^{\dag}$Work done during an internship at KuaiShou Technology.}

\begin{figure}[h]
  \centering
  \includegraphics[
      height=6.2cm, 
      page=7, 
      trim=0cm 1cm 0cm 1cm, 
      clip               
  ]{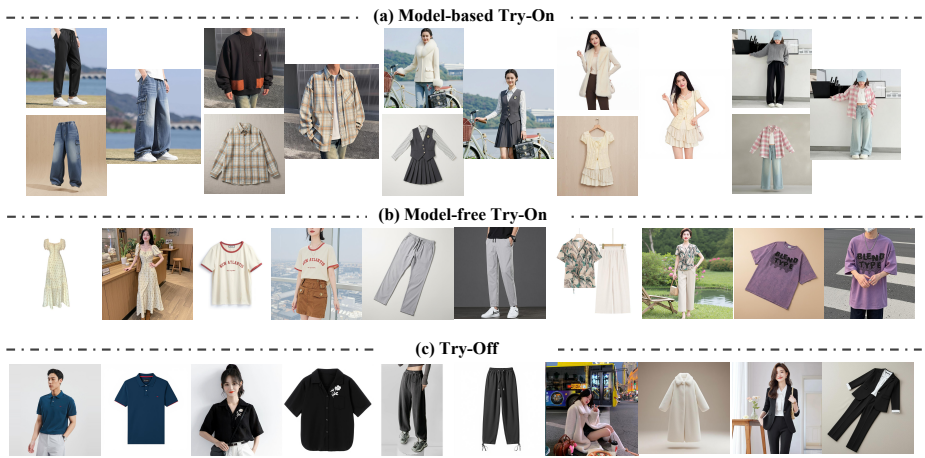}
  \caption{Generated images by our OmniDiT trained on our dataset Omni-TryOn. Our unified model supports above three tasks and keeps a strong consistency. Please zoom in to see details preservation.
  }
  \label{fig:showcase}
\end{figure}

\begin{abstract}
Despite the rapid advancement of Virtual Try-On (VTON) and Try-Off (VTOFF) technologies, existing VTON methods face challenges with fine-grained detail preservation, generalization to complex scenes, complicated pipeline, and efficient inference. To tackle these problems, we propose OmniDiT, an omni Virtual Try-On framework based on the Diffusion Transformer, which combines try-on and try-off tasks into one unified model. Specifically, we first establish a self-evolving data curation pipeline to continuously produce data, and construct a large VTON dataset Omni-TryOn, which contains over 380k diverse and high-quality garment-model-tryon image pairs and detailed text prompts. Then, we employ the token concatenation and design an adaptive position encoding to effectively incorporate multiple reference conditions. To relieve the bottleneck of long sequence computation, we are the first to introduce Shifted Window Attention into the diffusion model, thus achieving a linear complexity. To remedy the performance degradation caused by local window attention, we utilize multiple timestep prediction and an alignment loss to improve generation fidelity. Experiments reveal that, under various complex scenes, our method achieves the best performance in both the model-free VTON and VTOFF tasks and a performance comparable to current SOTA methods in the model-based VTON task.

  \keywords{Virtual Try-On \and Virtual Try-Off \and Diffusion Model}
\end{abstract}

\section{Introduction}
\label{sec:intro}
With the breakthroughs of Diffusion Models (DMs), Virtual Try-On (VTON) has attracted considerable focus due to its promising market prospect in e-commerce. VTON which aims to superimpose given garments onto specific model images \cite{wang2024stablegarment,chen2024anydoor,gou2023taming,kim2024stableviton,ECCV2022,xie2023gp}, provides an immersive and personalized shopping experience \cite{wang2025jco}.  Early approaches relied on generative adversarial network (GAN) \cite{goodfellow2014generative}, which typically added a warping module to achieve the semantic correspondence between the garments and the human body, and a generator module to fit the warped garments onto the body \cite{choi2021viton, ge2021disentangled, ge2021parser, men2020controllable, xie2023gp, he2022style, han2018viton}. However, this two-stage process often results in unnatural fits and cannot generalize to complex human poses due to the limited warping process \cite{choi2024improving, xu2025ootdiffusion}. 

Recently, based on latent diffusion models(LDM) \cite{ho2020denoisingdiffusionprobabilisticmodels, rombach2022highresolutionimagesynthesislatent}, many works \cite{zhu2023tryondiffusion, kim2024stableviton, xu2025ootdiffusion, morelli2023ladi, choi2024improving, wang2024stablegarment, sun2024outfitanyone, chong2024catvton, shen2025imagdressing, chen2024magic, lin2025dreamfit, chong2025fastfit} utilized the rich generative prior of pretrained text-to-image (T2I) models and achieved more natural tryon results. U-Net-based LDMs have effectively improved the realism of outfitted images and could generalize to more complicated scenes \cite{zhang2024boowvtonboostinginthewildvirtual, yang2020towards, li2025dit}.

LDM methods fall into two paradigms: mask-based and mask-free approaches. For mask-based ones, a binary human-agnostic mask is extracted and diffusion models are applied to inpaint the garment into the masked area \cite{morelli2023ladi, xu2025ootdiffusion, choi2024improving, chen2024anydoor}. This method greatly depends on the quality of masking, because it will easily generate patchy clothing and visible artifacts if the mask-extractor cannot perfectly segment the target area \cite{jiang2024fitdit,atef2025efficientviton,choi2024improving}. In addition, the mask area easily leads to information leakage, in which the mask shape can tell the model how and where to inpaint the given garment, reducing the difficulty of learning and weakening the generalization ability of the model. Taking these drawbacks into account, some works removed the masking step and applied an end-to-end pipeline, in which users input the garment and the human model, then the model outputs the tryon result \cite{zhang2024boowvtonboostinginthewildvirtual,niu2024pfdmparserfreevirtualtryon,chang2025pemf}. Mask-free diffusion models have emerged as the dominant paradigm
for high-fidelity virtual try-on \cite{wang2025jco}. 

However, the mask-free method also suffers from severe challenges. First, the end-to-end pipeline requires a fully consistent and matching image triple pair, which greatly complicates the construction of training datasets. The triple images demand that the target garments in the clothing image and the tryon image are exactly the same, and that the model image and the tryon image are completely identical in terms of both the model and the background except their target garments. Actually, popular datasets (\eg VITON \cite{han2018viton} and DressCode \cite{morelli2022dress}) still miss the images of matching models. Second, the model must infer all aspects of the garments' presentation, which increases the difficulty of precise and local refinement along the boundaries of clothing or background coherence \cite{wang2025jco}.

Recently, some works have begun exploring the Virtual Try-Off (VTOFF) application: extracting standardized garment images from clothed individuals, which apparently belongs to the inverse task of VTON \cite{velioglu2024tryoffdiff, xarchakos2024tryoffanyone, velioglu2025mgt, guo2025any2anytryon}. From another perspective, VTON systems demand massive high-quality garment lay-flat images, which are laborious to collect from the Internet. So, as a preliminary process, setting up a VTOFF pipeline to produce high-fidelity garment images can benefit VTON systems' sustainable operation. In addition, an ideal VTON model can not only perform model-based try-on (with two reference images), but command the capability of model-free try-on (with only a single garment image as input) \cite{wang2024stablegarment, chen2024magic, lin2025dreamfit, shen2025imagdressing}. 

Currently, most works focus on a single VTON or VTOFF task, which complicates the whole workflow and fails to satisfy users' diverse demands, only Any2anyTryon \cite{guo2025any2anytryon} attempted to combine the three abilities into one unified model. But their method suffers from simple and small training datasets, and cannot generalize to complex scenes.

To tackle above limitations, we propose OmniDiT, an omni VTON and VTOFF framework built upon a Diffusion Transformer (DiT) \cite{peebles2023scalablediffusionmodelstransformers} and mask-free paradigm. Specifically, we employ token concatenation to integrate reference signals and design an adaptive position encoding to distinguish different token blocks. Aiming to reduce computational costs of the attention modules \cite{vaswani2017attention}, we are the first to introduce shifted window attention (SWA) \cite{liu2021swin} into the diffusion model. In addition, to compensate for the performance degradation induced by SWA, we revise the flow matching objective by predicting multiple timesteps during training, promoting the current timestep's prediction to focus on global timesteps' prediction and yield more stable trajectories. Lastly, to enhance the clothing fidelity, we add an alignment loss anchored in the local clothing region. To further boost robustness and versatility, we curate a large scale try-on dataset--Omni-TryOn, which contains over 380k high-quality, diverse poses and scenes' garment, human model and tryon image triple pairs. We develop a self-evolving curation pipeline that combines the mature VTON and VTOFF capabilities to continuously produce data.

In summary, the contributions of our work include the following:
\begin{enumerate}
    \item We inject multiple condition signals into the DiT by concatenating tokens and designing an adaptive position encoding, aiming to combine VTON and VTOFF tasks into one unified framework.
    \item We introduce shifted window attention into the diffusion model to reduce computation complexity.
    \item Our multiple timesteps' prediction training strategy can encourage every timestep's prediction to focus on following prediction, reducing global error. And additional alignment loss can enhance the clothing fidelity.
    \item We propose a self-evolving data curation pipeline, which combines the updated model's abilities to continuously produce high-quality data. Based on the pipeline, we curate a large dataset with over 380k diverse samples. 
\end{enumerate}

\section{Related Works}
\label{sec:related}

\textbf{Image-based Virtual Try-On.}
Image-based virtual try-on has been extensively explored over the recent years, emerging as a promising and formidable task. Early studies based on Generative Adversarial Networks (GANs) \cite{goodfellow2014generative} proceed with a two-stage pipeline. A warping model deforms the given garment into a shape that approximately fits the person’s pose. Then, a GAN-based generator fuses the wrapped clothing into the result image generation \cite{men2020controllable, xie2023gp, lee2022high, yang2023occlumix, wang2018toward, han2019clothflow}. However, these approaches are frequently hampered by visual artifacts from inaccurate warping \cite{chong2025fastfit}. Subsequently, the advent of diffusion models revolutionized the field by reframing the task as an end-to-end conditional image generation, removing the error-prone warping step \cite{baldrati2023multimodal, choi2024improving, chong2024catvton, gou2023taming, morelli2023ladi, shen2025imagdressing, xu2025ootdiffusion, zeng2024cat, zhou2025learning, zhu2023tryondiffusion}. The dominant strategy in these methods involves injecting garment features into the denoising process via sophisticated conditioning mechanisms such as parallel encoder branches (i.e., ReferenceNets), ControlNet \cite{zhang2023adding} or IP-adapter \cite{ye2023ip}. With the advent of Diffusion Transformer architecture \cite{peebles2023scalablediffusionmodelstransformers, labs2025flux}, some works have explored more generalized conditioning schemes, among which token concatenation has achieved the best performance \cite{li2025dit, wu2025less,song2025omniconsistency, mou2025dreamo, guo2025any2anytryon, zhang2025easycontrol}. Despite achieving unprecedented high-fidelity, the model's vast size has significantly increased inference latency, especially multiple conditions as input, hindering their applications in real-world scenarios that demand rapid feedback and multi-item outfit composition \cite{chong2025fastfit}. Besides model-based try-on, another line of applications is model-free try-on, which needs to generate the human model and corresponding tryon results. This task is more difficult, because it requires the try-on system to imagine the human model based on pure text prompts and maintain the garment conditions' details. Current works mainly employ the condition-injection paradigm to achieve the goal \cite{shen2025imagdressing, guo2025any2anytryon, lin2025dreamfit, wang2024stablegarment}.

\noindent
\textbf{Virtual Try-Off.}
While most existing works focus on virtual try-on, few studies have explored virtual try-off \cite{zhang2022armani, zhang2023diffcloth, zhang2024garmentaligner}, the inverse task of virtual try-on, aiming to reconstruct a clean garment from a tryon image. Early works, such as TileGAN \cite{zeng2020tilegan}, used a two-stage pipeline: a U-Net-like encoder-decoder for coarse synthesis followed by a pix2pix-based refinement. Recent works explored text-guided garment generation based on Diffusion models \cite{guo2025any2anytryon, velioglu2024tryoffdiff, lee2025voost, velioglu2025mgt}. But rare works have attempted to combine try-on and try-off into a unified framework to learn \cite{lee2025voost, guo2025any2anytryon}. This unified setup not only
supports multitask learning but also mitigates architectural, task-specific, and category-specific inductive biases by exposing the model to broader structural variation.

\section{Method}
\label{sec:method}

\subsection{Preliminary}
\label{sec:prelimi}

\textbf{Virtual Try-On and Try-Off.} We mainly focus on mask-free Virtual Try-On and Try-Off. Given a human model image and a garment image as inputs, the model-based try-on pipeline generates a tryon image without relying on any mask condition, and the model-free try-on operates similarly, only removing the human model image. As the inverse task of Virtual Try-On, Virtual Try-Off generates the garment image based on the tryon image.

\noindent
\textbf{Flow Matching.} Unlike Diffusion denoising methods \cite{ho2020denoising, song2020score}, Flow matching \cite{liu2022flow, lipman2022flow} predicts a smooth, continuous velocity field from noise to data in a direct manner. By conducting the forward process by linearly interpolating between noise and data, it can potentially achieve faster and more efficient image synthesis. At timestep $t$, latent $x_t$ is defined as: $x_t = (1 - t)x_0 + tx_1$, where $x_0$ is the clean image, and $x_1 \in N(0,1)$ is the Gaussian noise. A model is trained to directly regress the target velocity given the noised latent $x_t$, timestep $t$, and conditions $c$ (including the text prompt, reference images.).  The flow matching loss can be summarized as follows:
\begin{equation}
    \mathcal{L}_{\text{FM}}(\theta) = \mathbb{E}_{t \sim \mathcal{U}[0,1]} \left[ \| v_\theta(x_t, t, c) - (x_1 - x_0) \|^2 \right]
\label{eq:ssp}
\end{equation}
where $v_{\theta}$ represents the velocity field predicted by a model parameterized by neural network weights $\theta$.


\subsection{Omni-TryOn Dataset Construction}
\label{sec:dataset}

\begin{figure}[t]
  \centering
  \includegraphics[
      height=6.5cm, 
      page=1, 
      trim=0.8cm 0.5cm 0.5cm 0.8cm, 
      clip               
  ]{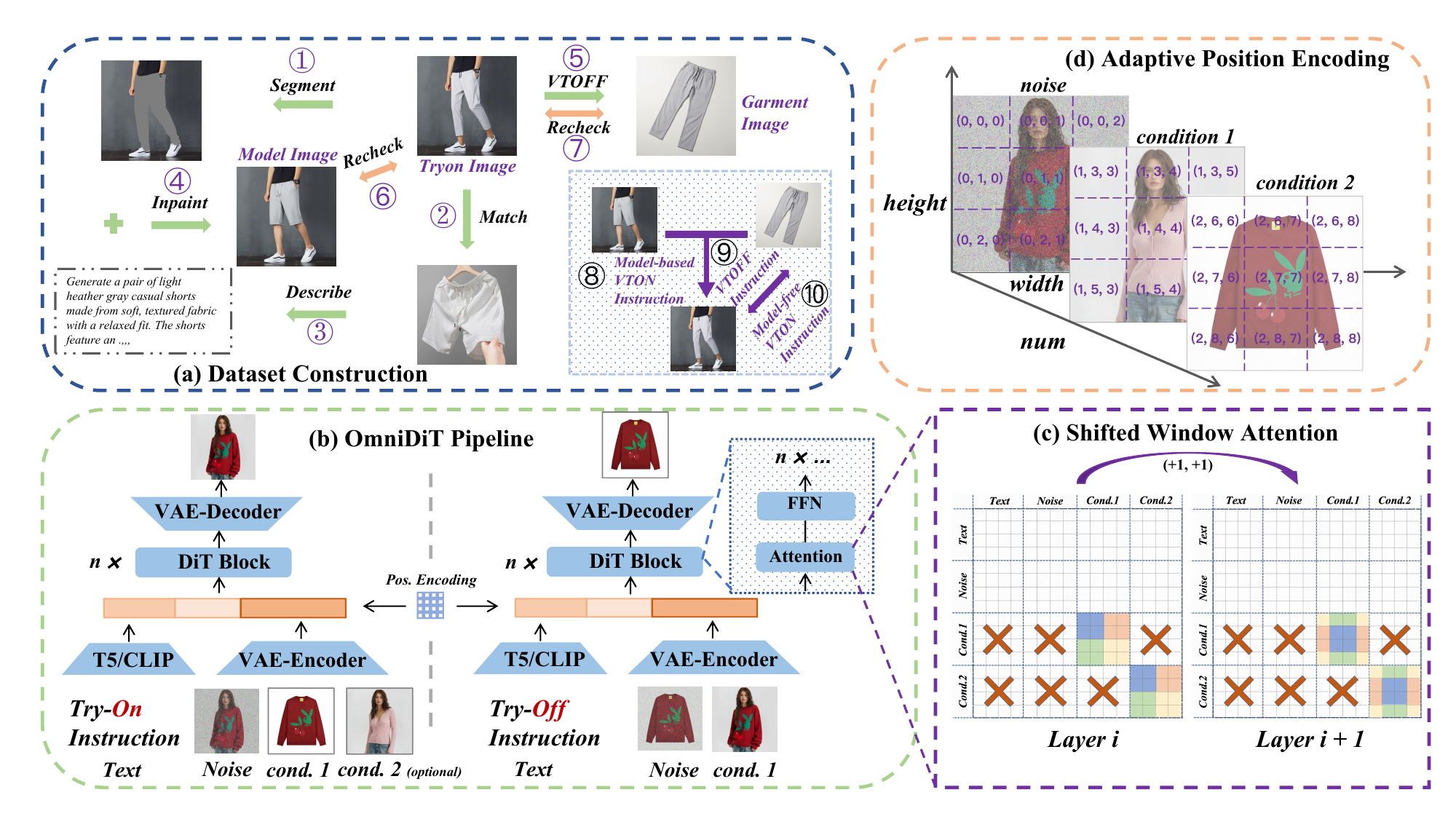}
  \caption{Overview of Our OmniDiT framework. The four blocks demonstrate our (a) dataset construction, (b) OmniDiT model details, (c) shifted windows attention, and(d) adaptive position encoding.
  }
  \label{fig:framework}
\end{figure}

An ideal Virtual Try-On dataset should satisfy three diverse and two consistent requirements: (1) diverse garments: including garment's categories and garment's styles; (2) diverse human models: including models' identities and postures; (3) diverse scenes: including garments and models' scenes; (4) consistent garments: the target garments in garment and tryon image pairs should keep consistent; (5): consistent models: the human model in the model and tryon image pairs should keep consistent. Current public try-on datasets, such as VITON-HD \cite{choi2021viton} and DressCode \cite{morelli2022dress} only contain garment-tryon pairs, missing the matching human model, and cannot meet the standard of diverse scenes and poses, limiting the development of the mask-free VTON techniques. 

\noindent
\textbf{Dataset Construction Pipeline.} 
We first collected two millions of garment display images from the Internet, including 3-5 pure garment images and tryon results for each garment. To construct fully matching and consistent triple pairs, we use powerful VLMs, such as Qwen3-VL \cite{bai2025qwen3} and InternVL-3.5 \cite{wang2025internvl3_5} to filter qualified images, and the criteria include garment or model image classification, watermark detection, content proportion, garments or models' count, garment categories, model type and age, and so on. Among filtered images, we categorize them into two groups based on whether they contain fully matching garments and tryon results: one contains garments and the corresponding tryon results, the other contains only the tryon results without corresponding garments. Then we can employ different workflows to synthesize the expected image pairs: 

\textbf{For garment-tryon pairs}: Due to the available garment and tryon results, we only need to produce a human model which has the same model and scene as the tryon image, but different dressed garment to replace the target garment. As shown in \cref{fig:framework}a, we use a segmentation model \cite{DBLP:journals/corr/abs-2105-15203} to generate a masked human image. Simultaneously, we sample a matching garment based on the target garment's category, the age and gender of the model from our pre-built garment database, which includes several types of garments suitable for diverse models with a wide range of ages and genders. Then employ a VLM to generate a detailed text description of the sampled garment, and utilize an inpainting model to fill in the masked image with the text instruction and obtain the corresponding model image. To ensure consistency among the triple image pairs, employ VLMs to recheck the results. Then, we use the Qwen3-VL model to generate detailed text prompts tailed for three tasks: model-based try-on, model-free try-on and try-off tasks. Finally, to guaranty the aesthetics, we utilize an aesthetic model to score the garment and tryon images for each sample.

\textbf{For only tryon images}: The overall pipeline is similar to that of garment-tryon pairs, only need to attain the corresponding target garment images additionally. Current powerful generation model such as Gemini-2.5-image \cite{google2025gemini} can put off the clothes to perfectly generate the target garment image.

Actually, after collecting the first batch of training data, we can update our original OmniDiT model to acquire the try-on and try-off abilities. Then OmniDiT can generate qualified model images based on tryon results and sampled garments, thus removing the mask and inpainting steps. Also replace the Nano Banana API service to simplify the try-off pipeline. The regenerated triplets are re-filtered via a VLM, yielding an expanded corpus with both high-quality and wider coverage. Overall, our dataset construction pipeline paired with updated OmniDiT model can achieve a self-evolving consequence, continuously producing high-quality data. After multiple iterations, we collected 380k high-quality, diverse samples with 1895 test samples as shown in \cref{tab:dataset}. More details can be found in \cref{app:dataset}.

\subsection{Model Architecture}
\label{sec:model archi}
In this part, we present our method for modifying DiT to adapt virtual try-on and try-off tasks. OmniDiT builds on Flux.1-Kontext-dev \cite{labs2025flux}, which injects the reference image and text prompt along with the noisy signal into its MM-DiT blocks by concatenating all the image and text tokens into a unified sequence. The token concatenation facilitates cross-modal interactions to iteratively guide and refine the synthesis process, thus generating images that faithfully reflect the prompts and reference conditions \cite{wang2025jco}. Several previous works have demonstrated that token concatenation yields the best performance \cite{li2025dit, wu2025less,song2025omniconsistency, mou2025dreamo}. Although the original Flux model only supports one reference image as input, it can accept more reference images by concatenating extra image tokens into the unified sequence, enabling seamless incorporation of various control signals and facilitating high-fidelity, controllable image generation.

\noindent
\textbf{Multi-Condition In-Context Generation.} As shown in \cref{fig:framework}b, the text prompt, the noise and reference images(garment and model images for the model-based try-on, garment images for the model-free try-on, tryon images for the try-off task) are encoded into the same latent space using text and image encoders. Assuming the text tokens $T$, noisy tokens $X$ and reference condition tokens $[C_1,...C_n]$, $n = 2$ or $n = 1$ for our above three tasks, all tokens are concatenated along the sequence dimension $S = [T;X;C_1;...C_n]$ and fed into the model.

\noindent
\textbf{Shifted Window Attention.} Due to the attention's quadratic complexity with respect to the sequence length, adding more high-resolution reference images into the sequence will cause the global self-attention computation unaffordable. To tackle this problem, we are the first to introduce the shifted window attention(SWA) \cite{liu2021swin} into the diffusion model. Considering the characteristic of image generation, we apply SWA only to the reference images. As illustrated in \cref{fig:framework}c, a full reference image is partitioned into several non-overlapping local windows, and the attention is computed within the local windows. To introduce cross-window connections, a shifted window partitioning approach is applied in consecutive attention blocks. The first layer $i$ uses a regular window partitioning strategy in which the 4 × 4 feature map is evenly partitioned into 2 × 2 windows of size 2 × 2 ($M = 2$). Then, the next layer $i + 1$ is shifted from that of the preceding layer, by rolling the windows by $\left( \left\lfloor \frac{M}{2} \right\rfloor, \left\lfloor \frac{M}{2} \right\rfloor \right)$ pixels towards the bottom right direction. The shifted window attention achieves linear complexity by limiting self-attention computation to non-overlapping local windows, and obtains comparable performance with global attention by allowing cross-window connection. Inspired by \cite{zhang2025easycontrol, tan2025ominicontrol}, we also introduce the Causal Conditional Attention design, in which the attention from condition tokens to denoising tokens (noise and text) is blocked, thus further reducing redundant computation and improving efficiency. In the case of two $1024\times$ reference images as input and generate a $1024\times$ image, the inference time is cut down from original 55s to 47s(-14.5\%) after applying SWA on an A800 GPU, and the effect will be more pronounced with higher resolution and more input conditions.

\noindent
\textbf{Adaptive Position Encoding.} After concatenating conditions into the unified sequence, we need reassign the position index of each image token to reduce condition confusion. FLUX.1-Kontext employs a three-dimensional RoPE \cite{su2024roformer} scheme that assigns the position indices (i, w, h) to both text and image tokens. In the original setting, text tokens are assigned a consistent position index of $(0, 0, 0)$, while noisy image and reference tokens are allocated shared position indices $(w, h)$ where $w \in [0, W - 1]$ and $h \in [0, H - 1]$. Only one different point lies in the first dimension index that $i = 0$ for noisy token and $i = 1$ for reference tokens. To inherit the original position priors and adapt to multiple conditions, we align the reference image with the noisy image in the diagonal position, as shown in \cref{fig:framework}d. We remain the position index of the text and noisy image, and the position index for the token $(w, h)$ in the $i_{th}$ reference image is defined as: 
\begin{equation}
    (\hat{i}, \hat{w}, \hat{h}) = (i, w_{noisy} + \sum_{j = 1}^{i - 1}w_{ref_j} + w,  h_{noisy} + \sum_{j = 1}^{i - 1}h_{ref_j} + h)
\end{equation}
where $w_{noisy}$ and $h_{noisy}$ represent the width and height of the noisy latent, and $w_{ref_j}$ and $h_{ref_j}$ represent the width and height of the $j_{th}$ reference condition latent. This position index assignment can avoid index overlap and encourage the model to distinguish different conditions.

Actually, the size of condition images is not necessarily same as the noisy image. So we need to interpolate position encodings to ensure spatial alignment inspired by UNO \cite{wu2025less}, and the final position index is represented as: 
\begin{equation}
    (\hat{i}, \hat{w}, \hat{h}) = (i, w_{noisy} + \sum_{j = 1}^{i - 1}w_{ref_j} + w * S_w,  h_{noisy} + \sum_{j = 1}^{i - 1}h_{ref_j} + h * S_h)
\end{equation}
where $S_w = \frac{w_{noisy}}{w_{ref_i}}, S_h = \frac{h_{noisy}}{h_{ref_i}}$ are scaling factors in width and height directions.

\subsection{Training Strategy.}
\textbf{Multiple Timesteps Prediction.}
In practice, most Flow Matching implementations employ a single-step prediction objective, where the model is trained to predict the velocity at isolated random time points $t \sim \mathcal{U}[0,1]$. Although effective, this approach provides no explicit constraint on the temporal consistency of the velocity field across adjacent time steps. Consequently, the learned velocity field may exhibit high-frequency oscillations, leading to unstable numerical integration and degraded generation quality.

In this work, we introduce Multi-Timestep Prediction (MTP), which extends the training objective by unrolling multiple Euler integration steps within a single training iteration and supervising the velocity prediction at each intermediate time point. Single-Timestep Prediction (SSP) trains the model by sampling a single random time $t$ and minimizing the velocity prediction error at that time point, as in \cref{eq:ssp}. Multi-Timestep Prediction (MTP) unrolls $K - 1$ Euler integration steps from time $t$ to $t - (K - 1)\Delta t$, supervising the velocity prediction at each intermediate step:
\begin{equation}
    \mathcal{L}_{\text{MTP}} = \frac{1}{K} \sum_{k=0}^{K-1} \mathbb{E} \left[ \| v_\theta(x_{t_k}, t_k, c) - (x_1 - x_0) \|^2 \right]
\end{equation}
where $x_{t_{k+1}} = x_{t_k} + (t_{k+1} - t_k) \cdot v_\theta(x_{t_k}, t_k)$ and $t_k = t - k\Delta t$.
We prove that MTP implicitly imposes a temporal smoothness constraint on the velocity field, effectively reducing its Lipschitz constant and yielding more stable trajectories. Detailed theoretical analysis is demonstrated in \cref{app:analysis}.

\noindent
\textbf{Alignment Loss.}
We additionally employ an alignment loss to enhance the fidelity of garment regions \cite{xu2025withanyone}. Denoting a feature extraction model as $\mathcal{E}$ (Dinov2 \cite{oquab2023dinov2} in our work), the generated image as $G$ and the ground-truth(GT) image as $GT$, we segment the ground-truth's garment region $M$ in advance, and extract the mask region's feature to compute the cosine distance between GT-aligned garment features of the generated and ground-truth images as:
\begin{equation}
    \mathcal{L}_{\text{align}} = 1 - \cos(\mathcal{E}(M \odot GT), \mathcal{E}(M \odot G))
\end{equation}

\noindent
The overall training objective is a weighted sum of the above two losses:
\begin{equation}
    \mathcal{L} = \mathcal{L}_{\text{MTP}} + \lambda \mathcal{L}_{\text{align}}
\end{equation}
where $\lambda = 0.10$ aims to control the contribution of alignment loss.

\begin{table}[t]
\centering
\caption{Quantitative comparison on \textbf{VITON-HD} and \textbf{DressCode} dataset for the \textbf{model-based try-on} task. We multiply KID by 1000 for better comparison. The best and the second best results are denoted as \textbf{Bold} and \underline{underline}, respectively.}
\label{tab:tryon_w_m}
\begin{tabular}{l|cccc|cccc}
\toprule
Dataset & \multicolumn{4}{c}{VITON-HD} & \multicolumn{4}{c}{DressCode} \\
\cmidrule(lr){2-5} \cmidrule(lr){6-9}
Method
 & FID$\downarrow$ & KID$\downarrow$ & SSIM$\uparrow$ & LPIPS$\downarrow$ & FID$\downarrow$ & KID$\downarrow$ & SSIM$\uparrow$& LPIPS$\downarrow$ \\
\midrule
IDM-VTON\cite{choi2024improving} &  \underline{6.1150}  &  1.0458  &  0.8654  &  \textbf{0.0744}  &  5.2637  & 2.0087   &  0.8874  &  0.0723 \\
OOTDiffusion\cite{xu2025ootdiffusion} &6.5364  &  \underline{0.9353}  &  0.8403  &  0.0875  &  3.9241  &  0.7381  &  0.8868  &  \underline{0.0678 }\\
StableGarment\cite{wang2024stablegarment} &  9.5081  &  3.1353  &  0.8353  &  0.0925  &  6.2972  &  2.5633  &  0.8762  &  \textbf{0.0641} \\
CatVTON\cite{chong2024catvton} & \textbf{6.1124}  &  0.9448  &  \underline{0.8746}  &  0.0976 &  3.4869  &  0.8294  & \textbf{0.8984}  &  0.0869\\
FastFit\cite{chong2025fastfit} & 7.3009  &  1.3189  &  0.8535  &  0.1208 &  \textbf{3.3824}  &  \underline{0.5449}  &  0.8915  &  0.0696 \\
FitDiT\cite{jiang2024fitdit} & 11.4429  &  4.2931  &  0.7827  &  0.1603 &  5.6900  &  2.1629  &  0.8437  &  0.1105 \\
Any2anyTryon\cite{guo2025any2anytryon} &  14.6365  &  7.1984  &  0.7795  &  0.1888 &  8.0300  &  3.5759  &  0.8305  &  0.1529 \\
Jco-MVTON\cite{wang2025jco} & 11.7850  &  4.8869  &  0.7888  &  0.1587 &  8.7186  &  3.9160  &  0.8462  &  0.1183 \\

\midrule

\textbf{OmniDiT} & 6.4564  &  \textbf{0.7502}  &  \textbf{0.8838}  &  \underline{0.0784}  &  \underline{3.4522}  &  \textbf{0.5245}  &  \underline{0.8980}  &  0.0789 \\
\bottomrule
\end{tabular}
\end{table}

\begin{table}[ht]
\centering
\caption{Quantitative results on \textbf{VITON-HD} dataset for the \textbf{model-free try-on} task. The best and the second best results are denoted as \textbf{Bold} and \underline{underline}.}
\label{tab:tryon_wo_m_vitonhd}
\begin{tabular}{l|ccccccc}
\toprule
Method
 & DINO-I $\uparrow$ &  CLIP-I $\uparrow$ & SSIM $\uparrow$  &  LPIPS $\downarrow$ & FID $\downarrow$ & KID $\downarrow$ & FFA $\uparrow$\\
\midrule
MagicClothing\cite{chen2024magic} & 0.3091  &  0.7014  &  0.5750  &  0.4689  &  46.6366  &  37.0567 & 0.5619 \\
StableGarment\cite{wang2024stablegarment} & \textbf{0.3710}  &  0.7319  &  0.5331  &  0.5056  & 55.9763  &  21.2100 & \textbf{0.5818}   \\
Any2anyTryon\cite{guo2025any2anytryon} & 0.2819  & 0.7880  & 0.6716  & 0.3921  & 73.3430  & 48.2852 & 0.5337 \\
DreamFit\cite{lin2025dreamfit} & 0.3626  &  \underline{0.7935} &  0.6826 &  \underline{0.3039} & \underline{15.6002}  & \underline{5.4227} & 0.5683 \\
IMAGDressing\cite{shen2025imagdressing} & 0.3459  &  0.7845 &  \underline{0.6915} & 0.3352  & 17.6286  & 8.7806 & 0.5640   \\

\midrule

\textbf{OmniDiT} & \underline{0.3638}  &  \textbf{0.8022}  &  \textbf{0.7014}  &  \textbf{0.2844}  &  \textbf{12.4448}  &  \textbf{3.2455} & \underline{0.5712} \\
\bottomrule
\end{tabular}
\end{table}

\begin{table}[h]
\centering
\caption{Quantitative comparison on \textbf{VITON-HD} dataset for the \textbf{try-off} task. The best and the second best results are denoted as \textbf{Bold} and \underline{underline}, respectively.}
\label{tab:tryoff_vitonhd}
\begin{tabular}{l|ccccccc}
\toprule
Method
 & CLIP-I $\uparrow$ & ms-SSIM $\uparrow$ &  LPIPS $\downarrow$ & FID $\downarrow$ & clip-FID $\downarrow$ & KID $\downarrow$  & DISTS $\downarrow$ \\
\midrule
TryOffDiff\cite{velioglu2024tryoffdiff} & 0.9177    &  \textbf{0.6991}  &  \underline{0.1933}  &  17.1050  &  7.0578 & 5.2198 & 0.2297   \\
MGT\cite{velioglu2025mgt} & 0.9015   &  0.6443  &  0.2508  &  21.8485  &  8.0539 & 8.8200 & 0.2716   \\
TryOffAnyone\cite{xarchakos2024tryoffanyone} & \underline{0.9242} &  0.6419  &  0.2109  &  \underline{11.3476}  &  \underline{5.1583} & \underline{1.9186} & \underline{0.2169}   \\
Any2anyTryon\cite{guo2025any2anytryon} & 0.8749    & 0.5607  & 0.3126  & 20.4215  & 8.0235 & 5.4847 &  0.2972 \\

\midrule

\textbf{OmniDiT} & \textbf{0.9391}  &  \underline{0.6782}  &  \textbf{0.1859}  &  \textbf{10.5479}  &  \textbf{2.4875}  &  \textbf{1.5611}  &  \textbf{0.1958} \\
\bottomrule
\end{tabular}
\end{table}

\section{Experiments}
\label{sec:experiments}

\subsection{Experimental Setup}
\label{sec:Setup}
\textbf{Training Data.} To make a fair comparison, we train a unified model on two publicly available datasets: VITON-HD \cite{choi2021viton}, containing 13,679 high-resolution pairs of frontal half-body models and upper-body garments, and DressCode \cite{morelli2022dress} with 53,792 pairs of full-body models and upper-body, lower-body, and dress garments. We use the official train/test splits provided by both datasets. All reference images are resized to $512 \times 768$ as input and generate $768 \times 1024$ images. In addition, we train a unified model on our Omni-TryOn with a resolution of $768\times768$ and reference images of $512 \times 512$ as supplementary.

\noindent
\textbf{Implementation Details.} All experiments are conducted using LoRA \cite{hu2022lora} with a rank of 128 on $8 \times$ NVIDIA H200 GPUs. For SWA, the window size $M$ is $16$, and for MTP, we predict $2$ timesteps for every optimization step, namely $K = 2$.
We adopt a two-stage training approach: the first stage optimizes only the model-free try-on and try-off tasks, and the second stage optimizes all three tasks for better consistency. 
In inference, we set the denoising steps to 30 and the guidance scale to 4. Additional details can be found in \cref{app:setting}.

\subsection{Model-based Try-On}

\begin{figure}[ht]
  \centering
  \includegraphics[
      height=6.5cm, 
      page=14, 
      trim=0.8cm 1.5cm 0.5cm 0.8cm, 
      clip               
  ]{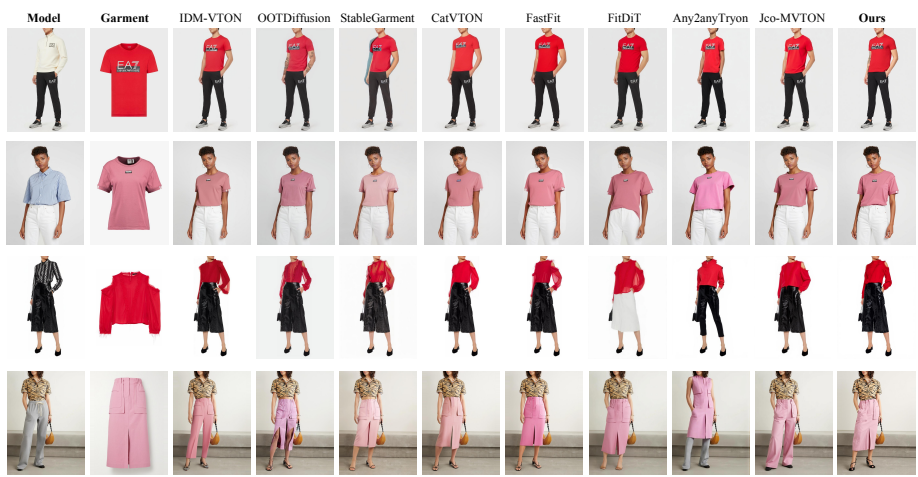}
  \caption{Qualitative comparison of the model-based try-on generation results on the VITON-HD and DressCode benchmarks. Please zoom in to see details preservation.
  }
  \label{fig:tryon_w_m_public}
\end{figure}

We adopt four popular metrics, including Structural Similarity (SSIM) \cite{wang2004image}, Learned Perceptual Image Patch Similarity (LPIPS) \cite{zhang2018unreasonable}, Fréchet Inception Distance (FID) \cite{heusel2017gans} and Kernel Inception Distance (KID) \cite{binkowski2018demystifying}.

For the public benchmarks, the baseline methods contain U-net-based and DiT-based models, including IDM-VTON\cite{choi2024improving}, OOTDiffusion\cite{xu2025ootdiffusion}, StableGarment\cite{wang2024stablegarment}, CatVTON\cite{chong2024catvton}, FastFit\cite{chong2025fastfit}, FitDiT\cite{jiang2024fitdit}, Any2anyTryon\cite{guo2025any2anytryon}, Jco-MVTON\cite{wang2025jco}. As shown in \cref{tab:tryon_w_m}, our model achieves comparable performance in most metrics and exceeds the SOTA methods in KID and SSIM. Compared with the unified model Any2anyTryon, OmniDiT outperforms it in all metrics, confirming our outstanding ability in producing high-quality and consistent tryon images. In \cref{fig:tryon_w_m_public}, we show some qualitative cases on public benchmarks. Our method consistently excels in preserving key attributes: text and logo on the red shirt, color on the pink shirt, texture, and details on the skirts.

In addition, we compare some methods on our Omni-TryOn benchmark. Besides professional try-on methods, we also test general in-context generation methods, including UNO \cite{wu2025less}, DreamOmni2 \cite{xia2025dreamomni2}, DreamO \cite{mou2025dreamo}, OmniGen2 \cite{wu2025omnigen2}. Quantitative results in \cref{tab:tryon_w_m_omni_tryon} demonstrate our method's eminent performance in handling complex scenes. In \cref{fig:tryon_w_m_omni_tryon}, professional try-on methods cannot deal with complex scenes and postures, and other general methods are limited to weak consistency. In contrast, our method fully maintains the garment's details and renders it on the human perfectly, verifying our outstanding performance. More visualization examples can be found in \cref{fig:add_tryon_w_m_1,fig:add_tryon_w_m_2}.

\subsection{Model-free Try-On}

\begin{figure}[ht]
  \centering
  \includegraphics[
      height=5cm, 
      page=15, 
      trim=0.8cm 5.5cm 0.5cm 0.8cm, 
      clip               
  ]{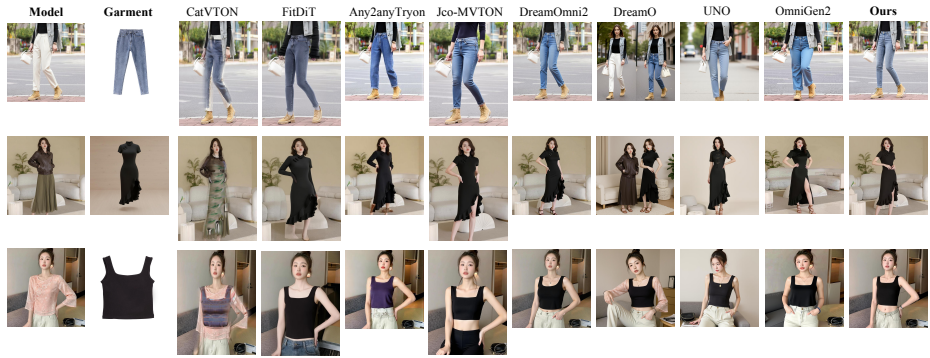}
  \caption{Qualitative comparison of the model-based try-on generation results on our Omni-TryOn benchmark. All methods adopt their optimal resolutions.
  }
  \label{fig:tryon_w_m_omni_tryon}
\end{figure}

\begin{figure}[ht]
  \centering
  \includegraphics[
      height=5.5cm, 
      page=16, 
      trim=0.8cm 2.5cm 0.5cm 0.8cm, 
      clip               
  ]{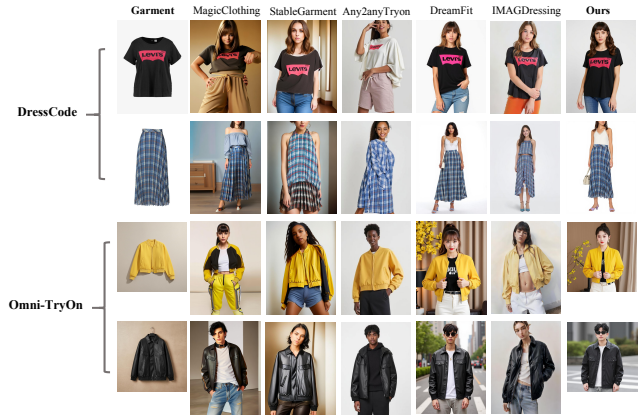}
  \caption{Qualitative comparison of the model-free try-on generation results. All methods adopt their optimal resolution. Please zoom in to see details preservation.
  }
  \label{fig:tryon_wo_m}
\end{figure}

For the model-free try-on task, quantitative metrics include: the visual similarity using DINO \cite{oquab2023dinov2} and CLIP \cite{radford2021learning}, SSIM, LPIPS, FID, KID and FFA \cite{kotar2023these}. Baselines contain MagicClothing \cite{chen2024magic}, StableGarment \cite{wang2024stablegarment}, Any2anyTryon \cite{guo2025any2anytryon}, DreamFit \cite{lin2025dreamfit}, IMAGDressing \cite{shen2025imagdressing}. The model-free try-on task requires the try-on system to imagine a suitable human model based on the given text prompt, and wear the given garment, thus being more challenging than the model-based try-on task. Our method shows a strong capability to create a real human model and maintain better fidelity in texture, color, and details in \cref{fig:tryon_wo_m}. 

Quantitative results in \cref{tab:tryon_wo_m_vitonhd,tab:tryon_wo_m_dresscode,tab:tryon_wo_m_omnitryon} verify that in both 'in shop' and 'in the wild' scenes, our method significantly outperforms all baselines, especially in terms of FID and KID, indicating that our model can capture comprehensive and fine-grained features from reference garments and fit the garment well into the generated human model. More visualization examples can be found in \cref{fig:add_tryon_wo_m}.

\begin{figure}[ht]
  \centering
  \includegraphics[
      height=4.5cm, 
      page=17, 
      trim=0.8cm 6.5cm 6cm 0.8cm, 
      clip               
  ]{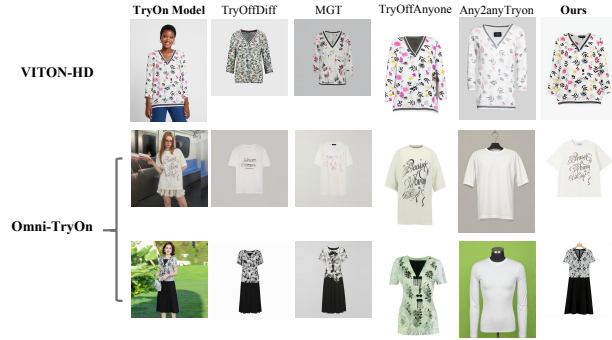}
  \caption{Qualitative comparison of the try-off generation results. All methods adopt their optimal resolution. Please zoom in to see details preservation.
  }
  \label{fig:tryoff}
\end{figure}

\subsection{Try-Off}

For the try-off task, baselines include TryOffDiff \cite{velioglu2024tryoffdiff}, MGT \cite{velioglu2025mgt}, TryOffAnyone \cite{xarchakos2024tryoffanyone}, Any2anyTryon \cite{guo2025any2anytryon}. As the inverse task of VTON, try-off needs to display the garment's full view and preserve all details, like color, logo, texture and texts. In \cref{tab:tryoff_vitonhd,tab:tryoff_dresscode,tab:tryoff_omnitryon}, our method remarkably outperforms all baselines on public and our benchmarks, confirming that our method takes good command of try-on and try-off abilities. Their detail preservation ability is mutually reinforcing and does not conflict. In \cref{fig:tryoff}, it is evident that the realism and fidelity of our try-off results are far superior to those of Any2anyTryon, verifying the validation of our unified model. More visualization examples can be found in \cref{fig:add_takeoff}.

\begin{figure}[ht]
  \centering
  \includegraphics[
      height=3.5cm, 
      page=18, 
      trim=2cm 7.5cm 1.8cm 1.4cm, 
      clip               
  ]{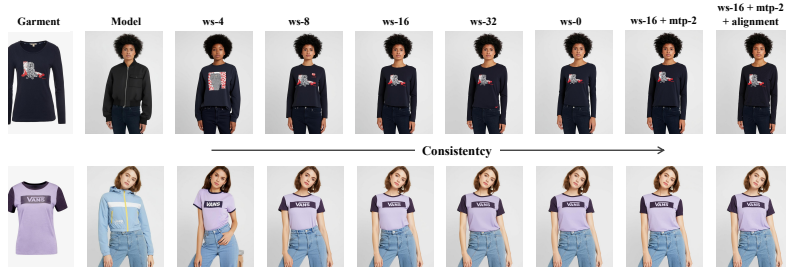}
  \caption{Qualitative comparisons on different window sizes and training strategies.
  }
  \label{fig:ablation}
\end{figure}

\begin{table}[ht]
\centering
\caption{Ablation study on \textbf{VITON-HD} dataset for the \textbf{model-based try-on} task. ref-512 + ws-16 + mtp-2 denotes that the reference condition is $512 \times 512$, local window size is 16 and 2 timesteps prediction.}
\label{tab:ablation}
\begin{tabular}{l|c|ccccc}
\toprule
Method & Alignment & FID$\downarrow$ & KID$\downarrow$ & SSIM$\uparrow$ & LPIPS$\downarrow$  \\
\midrule
ref-512 + ws-0 + mtp-1  & \ding{55} & 7.6341 & 1.2143 & 0.8607 & 0.1070 \\
ref-512 + ws-4 + mtp-1  & \ding{55} & 11.0544 & 2.6720 & 0.8531 & 0.1392  \\
ref-512 + ws-8 + mtp-1  & \ding{55} & 10.2492 & 2.6107 & 0.8680 & 0.1159\\
ref-512 + ws-16 + mtp-1 & \ding{55} & 8.5141 & 1.7236 & 0.8683 & 0.1090 \\
ref-512 + ws-32 + mtp-1 & \ding{55} & 8.3264 & 1.7037 & 0.8712 & 0.1095 \\
\midrule
ref-384 + ws-16 + mtp-2 & \ding{55} & 8.6728 & 1.3855 & 0.8738 & 0.1075 \\
ref-512 + ws-16 + mtp-2 & \ding{55} & 7.6120 & 0.9082 & 0.8781 & 0.0951  \\
ref-768 + ws-16 + mtp-2 & \ding{55} & 7.5431 & 0.8455 & 0.8800 & 0.0873  \\
\midrule
\textbf{ref-512 + ws-16 + mtp-2} & \ding{51} & \textbf{6.4564} & \textbf{0.7502} & \textbf{0.8838} & \textbf{0.0784} & \\
\bottomrule
\end{tabular}
\end{table}

\subsection{Ablation Study}
\label{sec:ablation}

We conduct ablation experiments to study the effect of Shift Window Attention, Multiple Timesteps Prediction and Alignment Loss. As shown in \cref{tab:ablation} and \cref{fig:ablation}, compared with original full attention (ws-0), a small window size will degrade performance, which is reasonable that the small window prevents attending further regions and limits the overall perspective. Because a larger window is approximately equivalent to the global latent size and the performance gap between $ws = 16$ and full attention is not intolerable, we set $ws = 16$ as the final choice. After employing MTP, four metrics have shown notable improvements, in which FID reduces from 8.2141 to 7.7120, and KID reduces from 1.8236 to 1.1282, indicating that MTP has positive effects on improving generation fidelity, due to achieving more stable and smooth trajectories, and lowering integration error, as analyzed in Appendix \ref{app:empir}. 

In \cref{tab:ablation}, we also compare different resolutions for the reference conditions, and the result of $384 < 512 < 768$ demonstrates that larger reference images will reduce the loss of detail information and ensure generation quality at the cost of a longer inference time. After adding the alignment loss, it is evident to see a significant performance improvement, and even the metrics surpass those of ref-768, which verifies the validation of alignment loss.

\subsection{Limitations}
\label{sec:limitations}
Despite the outstanding performance OmniDiT has achieved, we still face a major challenge: OmniDiT struggles to maintain the human model attributes, especially the human model's gesture. This mainly stems from that the inpainted human is slightly different from the tryon results in the data curation stage and not filtered out due to the VLM's weak visual ability, thus confusing the optimization objective in training afterwards. Therefore, research on how to develop a stronger VLM can benefit our data quality and model improvement, which is our future direction.

\section{Conclusion}
In this paper, we propose one unified VTON and VTOFF framework OmniDiT based on the Diffusion Transformer, demonstrating outstanding performance in three VTON and VTOFF tasks. To incorporate multiple reference conditions, we design an adaptive positive encoding to reduce token signal confusion induced by concatenating all tokens into one sequence. Meanwhile, we introduce the Shifted Window Attention to relieve the bottleneck of attention computation, thus achieving a linear computation complexity. Multiple timesteps prediction and alignment loss further improve the generation quality and fidelity. We also construct a large VTON dataset produced by our self-evolving data curation pipeline, aiming to address the data scarcity problem. Experiments reveal that our model has a significant advantage in generating high-fidelity and consistent try-on and try-off results under various complex scenes.



\section*{Acknowledgements}
Special thanks to Kuaishou Technology for supporting this research.
%
%
\bibliographystyle{splncs04}
\bibliography{main}

\newpage  
\appendix   
\crefalias{section}{appendix}  

\section{Dataset Details}
\label{app:dataset}

As illustrated in \cref{sec:dataset}, we carry out several rounds of data curation iterations, and add additional data samples from two public datasets \cite{choi2021viton, morelli2022dress} as a supplementary, thus obtaining a total of over 380k samples. Among all samples, each garment is labeled with one category and there are 23 categories in our dataset, as shown in \cref{fig:cate}. We keep the original garment labels for the public dataset samples. For model evaluation, we sample 1895 samples to establish the test set with stratified sampling based on the category distribution. Some data samples can be viewed in \cref{fig:data_samples}. In \cref{tab:dataset}, we compare our Omni-TryOn with other popular Try-On datasets, it is evident that our dataset is largest, most complete, and diverse. Our dataset equipped with detailed task instructions and matching human models, can make up the limitation of current try-on datasets and promote the VTON and VTOFF technology's development.

\begin{table}[h]
\caption{Comparison between our Omni-TryOn and related try-on datasets. 'Prompts' and 'Models' mean that dataset contains detailed text instructions for three tasks and matching models for the model-based try-on task.}
\label{tab:dataset}
\centering
\begin{tabular}{cccccc}
\toprule
Dataset & Prompts & Models &  \#Garments & \#Pairs & Resolution \\
\midrule
TryOnGAN \cite{lewis2021tryonganbodyawaretryonlayered} &  \ding{55} & \ding{55} & 52,000 & 52,000 & $512 \times 512$ \\
Revery AI\cite{li2021accuraterealisticoutfitsvisualization} &  \ding{55} & \ding{55} & 321,000 & 321,000 & $512 \times 512$ \\
VITON-HD \cite{choi2021viton} &  \ding{55} & \ding{55} & 13,679 & 13,679 & $1024 \times 768$ \\
DressCode \cite{morelli2022dress} &  \ding{55} & \ding{55}  & 53,792 & 53,792 & $1024 \times 768$ \\
IGPair \cite{shen2025imagdressing} & \ding{51} & \ding{55} & 86,873 & 324,857 & $>2K \times 2K$ \\
\midrule
\textbf{Omni-TryOn}(Ours) & \ding{51} & \ding{51} & 335,146 & 382,223 & diverse \\
\bottomrule
\end{tabular}
\end{table}

\begin{figure}[ht]
  \centering
  \begin{subfigure}[b]{0.45\textwidth} 
    \centering
    \includegraphics[
        height=4.5cm, 
        page=1, 
    ]{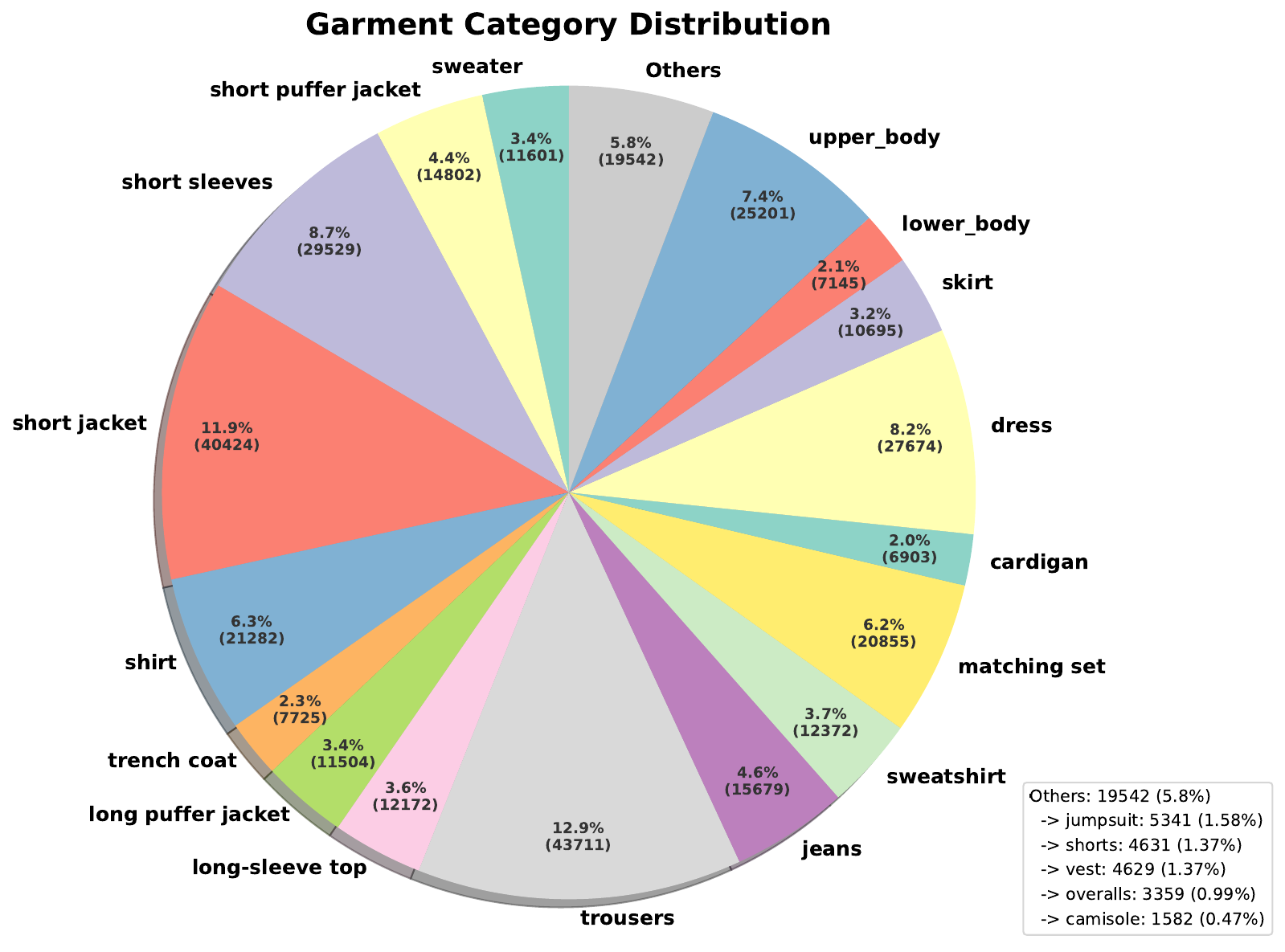}
    \caption{Training set} 
    \label{fig:cate-a}
  \end{subfigure}
  \hfill 
  \begin{subfigure}[b]{0.45\textwidth}
    \centering
    \includegraphics[
        height=4.5cm, 
        page=1,
    ]{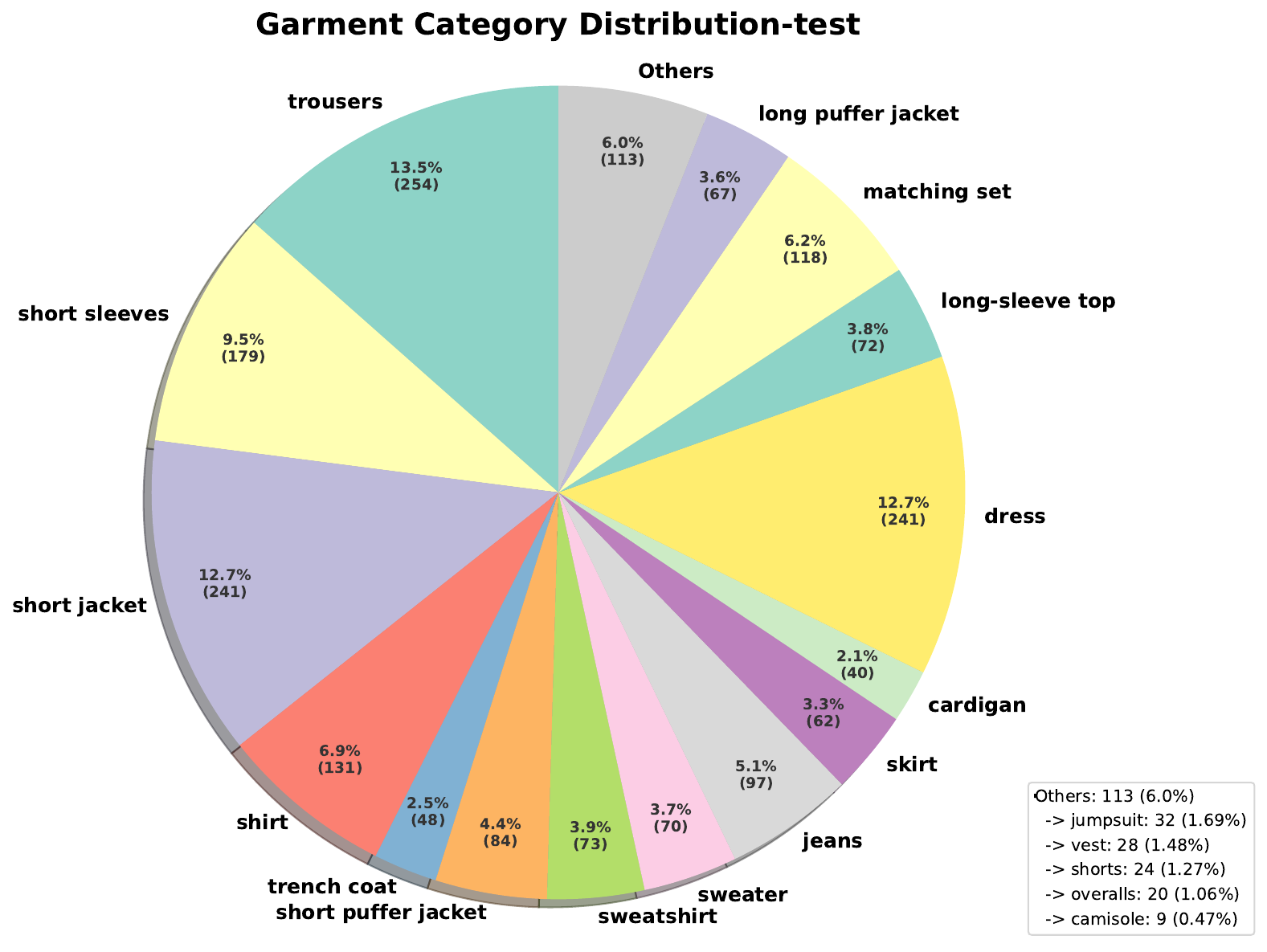} 
    \caption{Test set} 
    \label{fig:cate-b}
  \end{subfigure}
  
  \caption{Garment category distribution of our Omni-TryOn dataset.} 
  \label{fig:cate}
\end{figure}

\begin{figure}[h]
  \centering
  \includegraphics[
      height=6.5cm, 
      page=19, 
      trim=0.5cm 0.5cm 1.8cm 0cm, 
      clip               
  ]{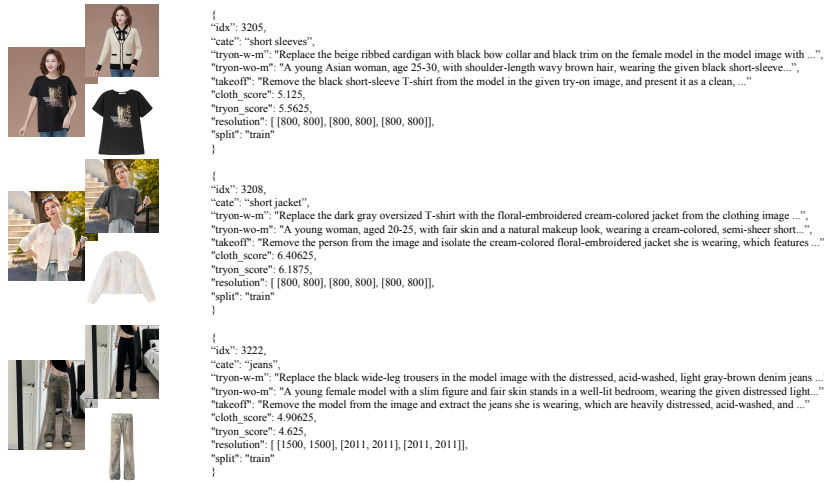}
  \caption{Some data sample showcases in our dataset Omni-TryOn.
  }
  \label{fig:data_samples}
\end{figure}

\section{Experiment Settings}
\label{app:setting}

\subsection{Model Architecture}
We replace all the attention modules in MM-DiT blocks (including double-stream and single-stream) with our introduced shifted window attention, and shift the windows every two layers. The window size is 16 and the shift size is 8. All local window attention computations are applied to the reference images.

\subsection{Training Settings}
Inspired by UNO \cite{wu2025less}, with the aim of progressive cross-modal alignment, we adopt a two-stage training approach. For the first stage, we train the model using single-reference pair data (model-free try-on and try-off tasks) for one epoch. Then, we continue training on mixed reference pair data (all three tasks) for 2-3 epochs. For public datasets, both stages utilize the learning rate of $4e-5$ and a batch size of 2 for each GPU (a total batch size of 16). For our own dataset, both stages utilize the learning rate of $4e-5$, a batch size of 2 for each GPU (a total batch size of 16) for the first stage and a batch size of 4 for each GPU (a total batch size of 32) for the second stage. In all trainings, we use the AdamW optimizer \cite{loshchilov2017fixing}.

For public datasets, we resize the reference images to $512 \times 768$ as input, and generate the target images with a resolution of $768 \times 1024$. And for our own dataset Omni-TryOn, the reference images are $512 \times 512$ and target images are $768 \times 768$.

For MTP, we predict two timesteps for every iteration, and the time interval $\Delta t$ is 30 in training.

To ensure the asthetics of generated garments and tryon results, we sample some proportions of data whose garment and tryon images' asthetic scores are larger than 4.5 for model-free try-on and try-off tasks, and the rest of the datasets for model-based try-on task to improve the consistency.

\subsection{Batch Sampling Strategy}
Considering the mixed number of reference images in the second training stage (one reference image for model-free try-on and try-off tasks, and two reference images for model-based try-on task), we strictly restrict that one batch has the same number of reference images. In other words, in all optimization steps, the model-based try-on samples are isolated from the other two tasks.

\section{Theoretical Analysis}
\label{app:analysis}

\subsection{Flow Matching Framework}

Let $p_0$ and $p_1$ be probability distributions over $\mathbb{R}^d$, representing the source and target distributions respectively. Flow Matching seeks to learn a velocity field $v_\theta: \mathbb{R}^d \times [0,1] \to \mathbb{R}^d$ such that the solution to the initial value problem:
\begin{equation}
    \frac{dx}{dt} = v_\theta(x, t), \quad x(0) \sim p_0
\end{equation}
satisfies $x(1) \sim p_1$. In practice, the velocity field is parameterized by a neural network and trained using a conditional flow matching objective.

For a pair of samples $(x_0, x_1)$ where $x_0 \sim p_0$ and $x_1 \sim p_1$, the conditional flow is defined as:
\begin{equation}
    \phi_t(x_0, x_1) = (1-t)x_0 + t x_1
\end{equation}
and the corresponding conditional velocity field is:
\begin{equation}
    u_t(x | x_0, x_1) = x_1 - x_0
\end{equation}

The standard Flow Matching loss minimizes the discrepancy between the learned velocity $v_\theta$ and the conditional velocity $u_t$:
\begin{equation}
    \mathcal{L}_{\text{FM}}(\theta) = \mathbb{E}_{\substack{t \sim \mathcal{U}[0,1] \\ (x_0, x_1) \sim p_0 \times p_1}} \left[ \| v_\theta(\phi_t(x_0, x_1), t) - (x_1 - x_0) \|^2 \right]
\end{equation}

\subsection{Single-Step vs. Multi-Timestep Prediction}

\textbf{Single-Step Prediction (SSP)} trains the model by sampling a single random time $t$ and minimizing the velocity prediction error at that time point:
\begin{equation}
    \mathcal{L}_{\text{SSP}} = \mathbb{E}_{t, x_t} \left[ \| v_\theta(x_t, t) - (x_1 - x_0) \|^2 \right]
\end{equation}

\noindent
\textbf{Multi-Timestep Prediction (MTP)} extends this by unrolling $K - 1$ Euler integration steps from time $t$ to $t - (K - 1)\Delta t$, supervising the velocity prediction at each intermediate step:
\begin{equation}
    \mathcal{L}_{\text{MTP}} = \frac{1}{K} \sum_{k=0}^{K-1} \mathbb{E} \left[ \| v_\theta(x_{t_k}, t_k) - (x_1 - x_0) \|^2 \right]
\end{equation}
where $x_{t_{k+1}} = x_{t_k} + (t_{k+1} - t_k) \cdot v_\theta(x_{t_k}, t_k)$ and $t_k = t - k\Delta t$.

\subsection{Lipschitz Continuity and Velocity Field Smoothness}

A key property of well-behaved velocity fields is Lipschitz continuity, which ensures stable numerical integration.

\begin{definition}[Lipschitz Continuity]
    A velocity field $v: \mathbb{R}^d \times [0,1] \to \mathbb{R}^d$ is $L$-Lipschitz continuous if:
    \begin{equation}
        \| v(x, t) - v(x', t') \| \leq L (\| x - x' \| + |t - t'|)
    \end{equation}
    for all $x, x' \in \mathbb{R}^d$ and $t, t' \in [0,1]$.
\end{definition}

The Lipschitz constant $L$ quantifies the smoothness of the velocity field. \textbf{Smaller $L$ implies smoother trajectories and lower numerical integration error.}

\subsection{Theoretical Results}
\label{app:theoret}

We now present our main theoretical results establishing the advantages of MTP.

\begin{theorem}[Implicit Smoothness Regularization]
    \label{thm:smoothness}
    Multi-Step Prediction implicitly regularizes the temporal smoothness of the velocity field. Specifically, the MTP loss can be decomposed as:
    \begin{equation}
        \mathcal{L}_{\text{MTP}} = \mathcal{L}_{\text{SSP}} + \lambda \cdot \mathcal{R}_{\text{smooth}} + \mathcal{O}(\Delta t^2)
    \end{equation}
    where $\lambda > 0$ and the smoothness regularizer is:
    \begin{equation}
        \mathcal{R}_{\text{smooth}} = \mathbb{E} \left[ \sum_{k=0}^{K-2} \| v_\theta(x_{t_{k+1}}, t_{k+1}) - v_\theta(x_{t_k}, t_k) \|^2 \right]
    \end{equation}
\end{theorem}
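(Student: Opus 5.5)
The plan is to expand each term of $\mathcal{L}_{\text{MTP}}$ around the first (on-manifold) term. Write $u = x_1 - x_0$ for the constant conditional target, $v_k := v_\theta(x_{t_k}, t_k)$, and $\delta_k := v_{k+1} - v_k$ for $k = 0,\dots,K-2$. The $k=0$ term uses $x_{t_0} = \phi_t(x_0,x_1)$ exactly, so its expectation over $t$ and $(x_0,x_1)$ is $\mathcal{L}_{\text{SSP}}$. For $k \ge 1$ we telescope, $v_k - u = (v_0 - u) + \sum_{j<k} \delta_j$, and expand the square:
\begin{equation}
\|v_k - u\|^2 = \|v_0 - u\|^2 + 2\Big\langle v_0 - u, \sum_{j<k}\delta_j\Big\rangle + \Big\|\sum_{j<k}\delta_j\Big\|^2 .
\end{equation}
Averaging over $k$ with weight $1/K$ then produces three groups of terms:
\begin{itemize}
\item the $\|v_0-u\|^2$ contributions, which sum to $\mathcal{L}_{\text{SSP}}$;
\item a purely quadratic group in the increments $\delta_j$;
\item a cross group coupling $v_0-u$ with the $\delta_j$.
\end{itemize}

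The quadratic group is where $\mathcal{R}_{\text{smooth}}$ comes from. I would first bound $\|\sum_{j<k}\delta_j\|^2$ from below and above using $\|\delta_j\|^2$ plus cross products $\langle\delta_i,\delta_j\rangle$ for $i\neq j$. The diagonal part gives $\frac1K\sum_{k}\sum_{j<k}\|\delta_j\|^2 = \frac1K\sum_{j=0}^{K-2}(K-1-j)\|\delta_j\|^2$. This is at least $\frac1K\sum_j\|\delta_j\|^2$, so it suggests $\lambda = 1/K$, or more precisely a $j$-dependent weight bounded below by $1/K$. The stated form with a single $\lambda$ is then obtained by taking the minimal weight and absorbing the rest into a nonnegative surplus. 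For $K=2$, the case used in practice, there is just one increment and the identity is exact with $\lambda = 1/2$.

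Next, to control the off-diagonal products and the cross group, I would Taylor-expand the increments. Along the unrolled Euler step, $x_{t_{k+1}} - x_{t_k} = -\Delta t\, v_k$ and $t_{k+1}-t_k = -\Delta t$. Assuming $v_\theta$ is $C^2$ (or $L$-Lipschitz with Lipschitz Jacobian), this gives $\delta_j = -\Delta t\,(\partial_t v + \nabla_x v\, v)(x_{t_j},t_j) + \mathcal{O}(\Delta t^2)$. Hence $\|\delta_j\| = \mathcal{O}(\Delta t)$, and every product $\langle \delta_i,\delta_j\rangle$ is $\mathcal{O}(\Delta t^2)$, as is $\|\delta_j\|^2$ itself. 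To keep the regularizer from being swallowed by the remainder, I would phrase the $\mathcal{O}(\Delta t^2)$ term as collecting only the off-diagonal and higher-order pieces. Concretely, $\mathcal{R}_{\text{smooth}}$ is kept exactly, and the remainder is $\sum_{i\ne j}\langle\delta_i,\delta_j\rangle$ together with the curvature corrections.

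The main obstacle is the cross term $\frac{2}{K}\sum_k\langle v_0-u,\sum_{j<k}\delta_j\rangle$, which is first order in $\Delta t$ and not obviously $\mathcal{O}(\Delta t^2)$. My first attempt is to take the conditional expectation given $(x_{t_0}, t_0)$. At the SSP optimum, $v_0$ equals the marginal velocity $\mathbb{E}[u \mid x_{t_0}, t_0]$, and $\delta_j$ is a deterministic function of $(x_{t_0}, t_0)$ through the Euler unroll. The cross term therefore has zero mean at stationarity. Away from the optimum, I would bound it by Cauchy--Schwarz by $\sqrt{\mathcal{L}_{\text{SSP}}}\cdot\mathcal{O}(\Delta t)$ and absorb it into the regime where the SSP residual is itself small; alternatively I would declare it part of the $\mathcal{O}(\Delta t^2)$ remainder under that assumption. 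If this fails, the fallback is to state the result as an inequality, $\mathcal{L}_{\text{MTP}} \ge \mathcal{L}_{\text{SSP}} + \lambda\mathcal{R}_{\text{smooth}} - \mathcal{O}(\Delta t^2)$, near optimality. Finally, I would remark that $\mathcal{R}_{\text{smooth}}\approx\Delta t^2\,\mathbb{E}\|\tfrac{d}{dt}v_\theta\|^2$ along trajectories, which is what links penalizing it to a smaller effective Lipschitz constant.
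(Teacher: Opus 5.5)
Your route differs from the paper's, and it is the sharper one. The paper never expands around $v_0$. It bounds $\|v_{k+1}-v_k\|\le\|v_{k+1}-u\|+\|v_k-u\|$ and squares to get $\mathbb{E}\|v_{k+1}-v_k\|^2\le 2\mathcal{L}_{\text{SSP}}(t_{k+1})+2\mathcal{L}_{\text{SSP}}(t_k)+\mathcal{O}(\Delta t^2)$. It then sums and concludes only the inequality $\mathcal{L}_{\text{MTP}}\ge\mathcal{L}_{\text{SSP}}+\frac{1}{2K}\mathcal{R}_{\text{smooth}}-\mathcal{O}(\Delta t^2)$, which is your fallback rather than the stated identity. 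Your exact expansion shows what that argument lacks. Summing the per-pair bound gives only $\mathcal{R}_{\text{smooth}}\le 4K\,\mathcal{L}_{\text{MTP}}+\mathcal{O}(\Delta t^2)$. The extra $\mathcal{L}_{\text{SSP}}$ in the paper's conclusion requires controlling precisely your cross term $\frac{2}{K}\sum_k\mathbb{E}\langle v_0-u,S_k\rangle$, with $S_k=\sum_{j<k}\delta_j$.

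Your plan does not control this cross term either. Cauchy--Schwarz against $\sqrt{\mathcal{L}_{\text{SSP}}}$ never becomes small, since $\mathcal{L}_{\text{SSP}}=\mathbb{E}\|v_0-\bar u\|^2+\mathbb{E}\|u-\bar u\|^2$ contains the irreducible conditional variance, where $\bar u=\mathbb{E}[u\mid x_{t_0},t_0,c]$. The tower-property step you started is the right tool, and it is exact: $\mathbb{E}\langle v_0-u,S_k\rangle=\mathbb{E}\langle v_0-\bar u,S_k\rangle=-k\Delta t\,\mathbb{E}\langle v_0-\bar u,\dot v_\theta\rangle+\mathcal{O}(\Delta t^2)$, with $\dot v_\theta=\partial_t v_\theta+(\nabla_x v_\theta)v_\theta$. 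This term is genuinely first order unless the excess risk $\mathcal{E}=\mathbb{E}\|v_0-\bar u\|^2$ is itself $\mathcal{O}(\Delta t^2)$. It is guaranteed to vanish only when $v_\theta=\bar u$ (realizability). A stationary point in $\theta$ only makes $v_0-u$ orthogonal to $\partial_\theta v_0$, not to $S_k$. So the stated decomposition fails for a generic model, and the optimality hypothesis must be stated explicitly, not absorbed.

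The quadratic group has a separate gap for $K\ge 3$. The off-diagonal products $\langle\delta_i,\delta_j\rangle$ are of the same order as $\mathcal{R}_{\text{smooth}}$ itself. Under your $C^2$ assumption both are $\mathcal{O}(\Delta t^2)$, which would make the stated identity vacuous for any $\lambda$. Moving them into the remainder is therefore a relabeling, not an estimate. The weight $\lambda=1/K$ is also false: for $K=3$ and $\delta_1=-\delta_0$, the quadratic group is $\frac13\|\delta_0\|^2$, whereas $\frac1K\sum_j\|\delta_j\|^2=\frac23\|\delta_0\|^2$.

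Avoid the Taylor expansion altogether. Since $S_0=0$ and $\delta_j=S_{j+1}-S_j$, the bound $\|\delta_j\|^2\le2\|S_{j+1}\|^2+2\|S_j\|^2$ gives $\frac1K\sum_k\|S_k\|^2\ge\frac{1}{4K}\sum_j\|\delta_j\|^2$ pointwise. This is the paper's triangle-inequality idea, applied to increments relative to $v_0$, where it really does yield a lower bound. Combine it with the pointwise Young inequality $2\langle v_0-\bar u,S_k\rangle+\|S_k\|^2\ge\frac12\|S_k\|^2-2\|v_0-\bar u\|^2$. This gives, for every $v_\theta$ and every $\Delta t$,
\[
\mathcal{L}_{\text{MTP}}\ \ge\ \mathcal{L}_{\text{SSP}}+\frac{1}{8K}\,\mathcal{R}_{\text{smooth}}-\frac{2(K-1)}{K}\,\mathcal{E}.
\]
For $K=2$, your identity $\mathcal{L}_{\text{MTP}}=\mathcal{L}_{\text{SSP}}+\frac12\mathcal{R}_{\text{smooth}}+\mathbb{E}\langle v_0-\bar u,\delta_0\rangle$ is exact. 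These are statements you can actually prove, and in them the error term is governed by the excess SSP risk, not by $\Delta t^2$.
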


\begin{proof}
    Consider two consecutive time steps $t_k$ and $t_{k+1} = t_k - \Delta t$. By the triangle inequality:
    \begin{align}
        &\| v_\theta(x_{t_{k+1}}, t_{k+1}) - v_\theta(x_{t_k}, t_k) \| \nonumber \\
        \leq\ & \| v_\theta(x_{t_{k+1}}, t_{k+1}) - u_{t_{k+1}} \| + \| u_{t_{k+1}} - u_{t_k} \| + \| u_{t_k} - v_\theta(x_{t_k}, t_k) \| \nonumber \\
        \leq\ & \| v_\theta(x_{t_{k+1}}, t_{k+1}) - u_{t_{k+1}} \| + \| u_{t_k} - v_\theta(x_{t_k}, t_k) \| + \mathcal{O}(\Delta t)
    \end{align}
    where $u_t = x_1 - x_0$ is the ground-truth conditional velocity, and we used the fact that $u_t$ is constant with respect to $t$.
    
    Squaring both sides and taking expectations:
    \begin{align}
        &\mathbb{E} \left[ \| v_\theta(x_{t_{k+1}}, t_{k+1}) - v_\theta(x_{t_k}, t_k) \|^2 \right] \nonumber \\
        \leq\ & 2 \mathbb{E} \left[ \| v_\theta(x_{t_{k+1}}, t_{k+1}) - u_{t_{k+1}} \|^2 \right] + 2 \mathbb{E} \left[ \| v_\theta(x_{t_k}, t_k) - u_{t_k} \|^2 \right] + \mathcal{O}(\Delta t^2) \nonumber \\
        =\ & 2 \mathcal{L}_{\text{SSP}}(t_{k+1}) + 2 \mathcal{L}_{\text{SSP}}(t_k) + \mathcal{O}(\Delta t^2)
    \end{align}
    
    Summing over all consecutive pairs and rearranging:
    \begin{equation}
        \mathcal{L}_{\text{MTP}} = \frac{1}{K} \sum_{k=0}^{K-1} \mathcal{L}_{\text{SSP}}(t_k) \geq \mathcal{L}_{\text{SSP}} + \frac{1}{2K} \mathcal{R}_{\text{smooth}} - \mathcal{O}(\Delta t^2)
    \end{equation}
    which completes the proof with $\lambda = \frac{1}{2K}$.
\end{proof}

\begin{remark}
    Theorem \ref{thm:smoothness} shows that MTP penalizes large changes in the velocity field between adjacent time steps. This encourages the model to learn a temporally consistent velocity field with smaller Lipschitz constant.
\end{remark}

\begin{theorem}[Trajectory Integration Error Bound]
    \label{thm:error_bound}
    Let $v_\theta$ be an $L_\theta$-Lipschitz velocity field learned via Flow Matching, and let $x_0^{\text{gen}}$ be the result of numerically integrating the ODE $\frac{dx}{dt} = v_\theta(x, t)$ from $x_1$ to $t=0$ using Euler's method with step size $\Delta t$. Then the trajectory error satisfies:
    \begin{equation}
        \| x_0^{\text{gen}} - x_0 \| \leq C \cdot L_\theta \cdot \Delta t + \mathcal{O}(\Delta t^2)
    \end{equation}
    where $C$ is a constant independent of $L_\theta$ and $\Delta t$.
\end{theorem}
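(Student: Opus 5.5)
The plan is to treat this as the classical global error estimate for the explicit Euler scheme, made explicit in the Lipschitz constant. First I fix what $x_0$ means. I will take $x_0$ to be the endpoint at $t=0$ of the \emph{exact} flow of $\frac{dx}{dt}=v_\theta(x,t)$ started from $x_1$. The discrepancy between this endpoint and a true data sample is a learning error, not an integration error. It would appear as an additive term independent of $\Delta t$, so I set it aside, as the statement implicitly does.

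Let $x(t)$ be the exact trajectory and $\hat x_n$ the Euler iterates at $t_n = 1-n\Delta t$, with $N=1/\Delta t$ steps. Write $e_n=\|\hat x_n - x(t_n)\|$ and $M=\sup\|v_\theta\|$ along the trajectory.

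\textbf{Step 1 (local truncation error).} Taylor's theorem gives
\begin{equation*}
x(t_{n+1}) = x(t_n) - \Delta t\, v_\theta(x(t_n),t_n) + \tau_n, \qquad \|\tau_n\|\le \tfrac12 \Delta t^2 \sup\|\ddot x\|.
\end{equation*}
Along the flow, $\ddot x = \partial_t v_\theta + (\nabla_x v_\theta)\, v_\theta$. Since $v_\theta$ is $L_\theta$-Lipschitz in the sense of the definition above, both $\|\partial_t v_\theta\|$ and $\|\nabla_x v_\theta\|$ are at most $L_\theta$. Hence $\|\tau_n\|\le \tfrac12 L_\theta(1+M)\Delta t^2$. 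This is exactly where $L_\theta$ enters multiplicatively. If $v_\theta$ is only Lipschitz and not $C^1$, I will replace the derivative bound by the Lipschitz bound applied directly to $\int_{t_{n+1}}^{t_n}\bigl(v_\theta(x(s),s)-v_\theta(x(t_n),t_n)\bigr)\,ds$, which gives the same estimate.

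\textbf{Step 2 (error propagation).} Subtracting the Euler update from the expansion in Step 1 and applying the Lipschitz bound in $x$ gives
\begin{equation*}
e_{n+1}\le (1+L_\theta\Delta t)\,e_n + \tfrac12 L_\theta(1+M)\Delta t^2, \qquad e_0=0.
\end{equation*}
The discrete Gr\"onwall inequality then yields
\begin{equation*}
e_N\le \frac{(1+L_\theta\Delta t)^N-1}{L_\theta\Delta t}\cdot \tfrac12 L_\theta(1+M)\Delta t^2 \le \tfrac12 (1+M)\,\bigl(e^{L_\theta}-1\bigr)\,\Delta t .
\end{equation*}

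\textbf{Step 3 (main obstacle).} The difficulty is reconciling the factor $e^{L_\theta}-1$ with the claimed form $C\,L_\theta\,\Delta t$, where $C$ does not depend on $L_\theta$. My approach is to note that $e^{L}-1 = L\cdot\frac{e^{L}-1}{L}$. The factor $\frac{e^{L}-1}{L}$ is increasing in $L$, so on any admissible range $L_\theta\le L_{\max}$ it is bounded by $\frac{e^{L_{\max}}-1}{L_{\max}}$. I will therefore set $C=\tfrac12(1+M)\frac{e^{L_{\max}}-1}{L_{\max}}$. In the regime $L_\theta\ll 1$ this constant is about $\tfrac12(1+M)$, and the bound becomes genuinely linear in $L_\theta$.

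I will state explicitly that the result holds with this bounded-$L$ (or small-$L$) interpretation of $C$. I will also state that $M$ must be bounded independently of $L_\theta$, for instance by assuming bounded data and noise. Everything beyond first order in $\Delta t$ comes from expanding $(1+L\Delta t)^N$ and from the higher-order Taylor remainders, and these are collected into the $\mathcal{O}(\Delta t^2)$ term.
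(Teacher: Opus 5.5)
Your argument is correct for the statement as you reformulate it, but it takes a genuinely different route from the paper. The paper keeps $x_0$ as the ground-truth data endpoint of the conditional path $(1-t)x_0+tx_1$. It observes that the classical Euler constant (the bound on $\ddot x$) vanishes on that straight path, and writes the endpoint error as the Riemann sum $\sum_k \epsilon(t_k)\Delta t$ of velocity errors $\epsilon=v_\theta-u$. It then uses the Lipschitz property only to bound increments, $\|\epsilon(t_{k+1})-\epsilon(t_k)\|\le L_\theta C'\Delta t$. You instead compare Euler with the exact flow of $v_\theta$, so $L_\theta$ enters through the local truncation error $\|\ddot x\|\le L_\theta(1+M)$, and you close with discrete Gr\"onwall. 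That local truncation error is exactly the quantity the paper sets to zero, because it evaluates it on the conditional path rather than on the $v_\theta$-flow that Euler actually discretizes. The paper's route aims at the quantity the MTP-versus-SSP corollary cares about, namely distance to the data. However, its final step does not follow. Increment bounds control only the variation of $\epsilon$, not its size, and yield at best $\sum_k\|\epsilon(t_k)\|\Delta t\le\|\epsilon(t_0)\|+\frac{1}{2}C'L_\theta$, which has no factor $\Delta t$.

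Your route is rigorous and shows exactly what has to be given up. You should therefore present your two qualifications as necessary changes to the statement, not as readings it ``implicitly'' makes; the paper's proof explicitly measures against the data sample. Under that reading, the left-hand side tends, as $\Delta t\to 0$, to the distance between the exact $v_\theta$-flow endpoint and $x_0$. That distance is generically nonzero, so the bound cannot hold. The cap $L_\theta\le L_{\max}$ is also not an artifact of Gr\"onwall. Take a field that produces an $\mathcal{O}(M\Delta t)$ Euler error early on (say, through a purely time-dependent drift) and then expands it at rate $L_\theta$ along a direction in which the exact trajectory sits at a fixed point. Its leading error coefficient is of order $M e^{L_\theta/2}$ while $\sup\|v_\theta\|$ stays fixed, so no $L_\theta$-independent $C$ exists without such a cap. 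A minor point: your Gr\"onwall estimate is already a clean first-order bound, so the $\mathcal{O}(\Delta t^2)$ remainder is superfluous.
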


\begin{proof}
    The global truncation error of Euler's method for an $L$-Lipschitz ODE is bounded by \cite{iserles2009first}:
    \begin{equation}
        \| x(t) - x_{\text{num}}(t) \| \leq \frac{M}{2L} (e^{Lt} - 1) \Delta t
    \end{equation}
    where $M$ bounds the second derivative of the true solution. For our linear conditional flow $\phi_t(x_0, x_1) = (1-t)x_0 + t x_1$, we have $M = 0$, but the error arises from the discrepancy between $v_\theta$ and the true velocity $u_t$.
    
    Let $\epsilon(t) = v_\theta(x_t, t) - u_t$ be the velocity prediction error. The accumulated error after $N = 1/\Delta t$ steps is:
    \begin{align}
        \| x_0^{\text{gen}} - x_0 \| &= \left\| \sum_{k=0}^{N-1} \left[ v_\theta(x_{t_k}, t_k) - u_{t_k} \right] \Delta t \right\| \nonumber \\
        &\leq \sum_{k=0}^{N-1} \| \epsilon(t_k) \| \Delta t
    \end{align}
    
    By the Lipschitz property of $v_\theta$ and the fact that $u_t$ is constant:
    \begin{align}
        \| \epsilon(t_{k+1}) - \epsilon(t_k) \| &= \| v_\theta(x_{t_{k+1}}, t_{k+1}) - v_\theta(x_{t_k}, t_k) \| \nonumber \\
        &\leq L_\theta (\| x_{t_{k+1}} - x_{t_k} \| + \Delta t) \nonumber \\
        &\leq L_\theta (\| v_\theta \| \Delta t + \Delta t) \nonumber \\
        &\leq L_\theta C' \Delta t
    \end{align}
    where $C' = \| v_\theta \| + 1$.
    
    This implies that the error $\epsilon(t)$ cannot change too rapidly, and the accumulated error is bounded by:
    \begin{equation}
        \| x_0^{\text{gen}} - x_0 \| \leq C \cdot L_\theta \cdot \Delta t + \mathcal{O}(\Delta t^2)
    \end{equation}
    where $C$ absorbs constants related to $\| v_\theta \|$ and the number of steps.
\end{proof}

\begin{corollary}
    If MTP reduces the Lipschitz constant of the learned velocity field such that $L_\theta^{\text{MTP}} < L_\theta^{\text{SSP}}$, then:
    \begin{equation}
        \| x_0^{\text{gen, MTP}} - x_0 \| < \| x_0^{\text{gen, SSP}} - x_0 \|
    \end{equation}
    for sufficiently small $\Delta t$.
\end{corollary}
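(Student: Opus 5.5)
The plan is to derive the corollary directly from Theorem~\ref{thm:error_bound}, applied once to the MTP-trained field and once to the SSP-trained field. Both use the same Euler step size $\Delta t$, the same initial noise $x_1$ and the same target $x_0$. The theorem then gives
\begin{equation*}
\| x_0^{\text{gen, MTP}} - x_0 \| \leq C_{\text{MTP}} L_\theta^{\text{MTP}} \Delta t + \mathcal{O}(\Delta t^2), \qquad \| x_0^{\text{gen, SSP}} - x_0 \| \leq C_{\text{SSP}} L_\theta^{\text{SSP}} \Delta t + \mathcal{O}(\Delta t^2).
\end{equation*}

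The first step is to make the constants comparable. The proof of Theorem~\ref{thm:error_bound} shows that $C$ absorbs $\|v_\theta\|$ and step-count factors. I will therefore assume a common a priori bound on $\|v_\theta\|$ for both fields, for instance a uniform bound on the relevant compact set, which is natural because both fields regress the same target $x_1 - x_0$. Under that assumption I can take a single constant $C = \max(C_{\text{MTP}}, C_{\text{SSP}})$ on both sides. The gap between the two leading terms is then $C(L_\theta^{\text{SSP}} - L_\theta^{\text{MTP}})\Delta t$. This is strictly positive and linear in $\Delta t$, so it dominates the $\mathcal{O}(\Delta t^2)$ remainders once $\Delta t < \Delta t^\ast$, where $\Delta t^\ast$ is proportional to $C(L_\theta^{\text{SSP}} - L_\theta^{\text{MTP}})$ divided by the implied constants of the remainders.

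The main obstacle is that two upper bounds do not by themselves order the actual errors. Knowing $a \le A$, $b \le B$ and $A < B$ does not give $a < b$. A rigorous argument therefore needs a matching lower bound on the SSP side, namely
\begin{equation*}
\| x_0^{\text{gen, SSP}} - x_0 \| \geq c\, L_\theta^{\text{SSP}} \Delta t - \mathcal{O}(\Delta t^2), \qquad c \text{ close to } C.
\end{equation*}
This amounts to saying the error bound of Theorem~\ref{thm:error_bound} is sharp to leading order. My first attempt at this would start from the telescoping sum $\sum_k \epsilon(t_k)\Delta t$ used in the proof of Theorem~\ref{thm:error_bound}. The idea is to argue that the per-step errors $\epsilon(t_k)$ are essentially aligned, so that the triangle inequality there is nearly an equality. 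This is reasonable because $u_t$ is constant and $\epsilon$ varies slowly, by the Lipschitz estimate in that proof. Under that alignment, the accumulated error is asymptotically $C L_\theta \Delta t$ rather than merely bounded by it.

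If that alignment argument cannot be justified in general, I will state the corollary honestly as a comparison of the guaranteed worst-case error bounds. That version follows immediately from the two displayed inequalities. I will add a remark that it becomes a strict comparison of actual errors under the tightness assumption above. The smoothness decomposition of Theorem~\ref{thm:smoothness} would be cited only as motivation for the hypothesis $L_\theta^{\text{MTP}} < L_\theta^{\text{SSP}}$, not as part of the proof.
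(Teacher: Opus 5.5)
The paper gives no separate proof of this corollary. It is meant to follow from Theorem~\ref{thm:error_bound} by comparing the two upper bounds, which is exactly the inference you correctly flag as invalid ($a\le A$, $b\le B$, $A<B$ does not give $a<b$). The gap is in your proposed repair. A lower bound $\| x_0^{\text{gen, SSP}} - x_0 \| \geq c\, L_\theta^{\text{SSP}} \Delta t - \mathcal{O}(\Delta t^2)$ cannot come from aligning the $\epsilon(t_k)$. The Lipschitz constant only controls how fast $\epsilon$ varies, not how large it is. Even with perfect alignment, the error is $\sum_k \|\epsilon(t_k)\|\Delta t \approx \int_0^1 \|\epsilon(t)\|\,dt$. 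That is an $\mathcal{O}(1)$ quantity set by the size of the prediction error, and it contains no factor $L_\theta\Delta t$.

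In fact no argument from the hypothesis $L_\theta^{\text{MTP}} < L_\theta^{\text{SSP}}$ alone can work, because the statement is false. Fix $(x_0,x_1)$, let $u = x_1 - x_0$ and $\phi_t = (1-t)x_0 + t x_1$, and take
\[
v^{\text{SSP}}(x,t) = u + L\,(x - \phi_t), \qquad v^{\text{MTP}}(x,t) = u + e, \quad e \neq 0 .
\]
The first field has Lipschitz constant at least $L>0$. Yet the Euler step sends $\phi_{t_k}$ to $\phi_{t_k} - \Delta t\, u = \phi_{t_{k+1}}$, so integrating from $x_1=\phi_1$ lands exactly on $x_0$, with zero error. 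The second field is constant, so its Lipschitz constant is $0$. Yet Euler returns $x_0 - e$, an error of $\|e\|$ for every $\Delta t$.

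The same example refutes Theorem~\ref{thm:error_bound} itself. With $L_\theta = 0$ its bound tends to $0$ as $\Delta t \to 0$, while the error stays $\|e\|$. Its proof ends by letting $C$ absorb the number of steps $N = 1/\Delta t$, which contradicts the claim that $C$ is independent of $\Delta t$. So your fallback, a comparison of worst-case bounds, also rests on an unsound inequality.

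What can honestly be said is at most a comparison of the standard upper bounds on the Euler discretization error, measured against each field's own exact flow. That is a different and weaker statement. It says nothing about the accumulated prediction error $\sum_k \epsilon(t_k)\Delta t$, which is what dominates, and it still does not order the actual errors.
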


\subsection{Empirical Results}
\label{app:empir}
We employ the SSP and MTP model weights to generate 1500 try-on results under the same settings. As illustrated in \cref{fig:lipschi}, the Lipschitz constants of the MTP model are apparently lower than those of the SSP model, which corresponds to the conclusion of \cref{thm:error_bound}.

Above all, we demonstrate that our MTP training strategy can enhance the smoothness of trajectory prediction and achieve lower integration error from both theoretical and empirical results.

\begin{figure}[ht]
  \centering
  \includegraphics[
      height=7.5cm, 
      page=1, 
  ]{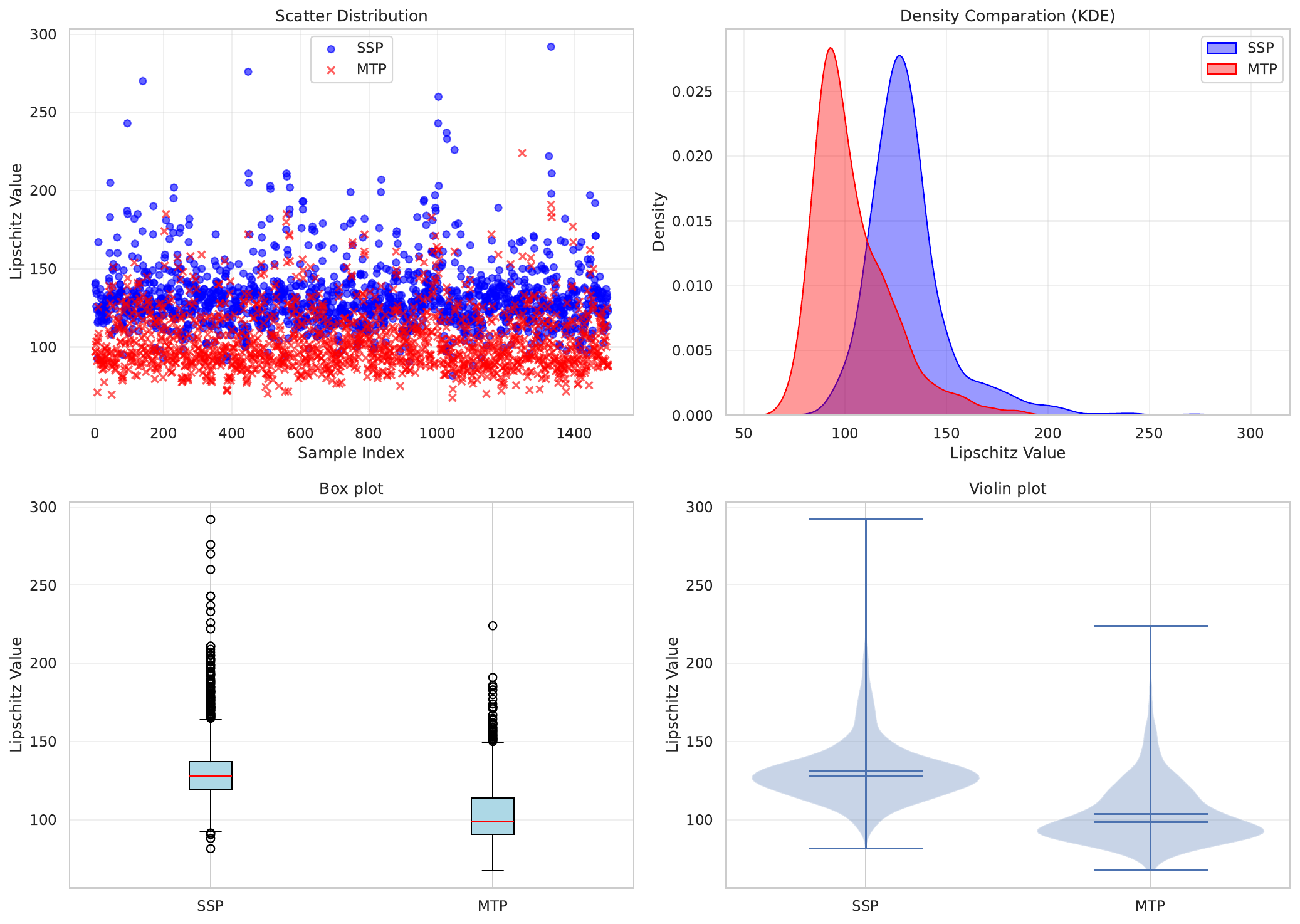}
  \caption{Lipschitz constant comparison between SSP and MTP.
  }
  \label{fig:lipschi}
\end{figure}

\section{Additional Results}
\label{app:add_results}

\begin{table}[ht]
\centering
\caption{Quantitative comparison on \textbf{Omni-TryOn} dataset for the \textbf{model-based try-on} task. We multiply KID by 1000 for better comparison. The best and the second best results are denoted as \textbf{Bold} and \underline{underline}, respectively.}
\label{tab:tryon_w_m_omni_tryon}
\begin{tabular}{l|cccc}
\toprule
Dataset & \multicolumn{4}{c}{Omni-TryOn}\\
\cmidrule(lr){2-5}
Method
 & FID$\downarrow$ & KID$\downarrow$ & SSIM$\uparrow$ & LPIPS$\downarrow$  \\
\midrule

CatVTON\cite{chong2024catvton} &36.1327 & 15.3471 & 0.5904 & 0.3497\\
FitDiT\cite{jiang2024fitdit} & 25.1988 & 7.0999 & 0.6673 & \underline{0.2440} \\
Any2anyTryon\cite{guo2025any2anytryon} & 25.9283 & 9.7514 & 0.6537 & 0.2954 \\
Jco-MVTON\cite{wang2025jco} & 27.5184 & 7.9581 & \underline{0.6702} & 0.2449 \\

\midrule

DreamOmni2\cite{xia2025dreamomni2} & \underline{20.7575} & \underline{3.0112} & 0.6444 & 0.2645\\
DreamO\cite{mou2025dreamo} & 32.6664 & 9.6479 & 0.4688 & 0.5072\\
UNO\cite{wu2025less} &  31.3706 & 9.0984 & 0.5470 & 0.4775 \\
OmniGen2\cite{wu2025omnigen2} & 25.1083 & 4.7097 & 0.5277 & 0.4315\\

\midrule

\textbf{OmniDiT} & \textbf{15.9592}  &  \textbf{1.4308}  &  \textbf{0.7115}  &  \textbf{0.1728} \\
\bottomrule
\end{tabular}
\end{table}

\begin{table}[ht]
\centering
\caption{Quantitative comparison on \textbf{DressCode} dataset for the \textbf{model-free try-on} task. The best and the second best results are denoted as \textbf{Bold} and \underline{underline}, respectively.}
\label{tab:tryon_wo_m_dresscode}
\begin{tabular}{l|ccccccc}
\toprule
Method
 & DINO-I $\uparrow$ &  CLIP-I $\uparrow$ & SSIM $\uparrow$  &  LPIPS $\downarrow$ & FID $\downarrow$ & KID $\downarrow$ & FFA $\uparrow$\\
\midrule
MagicClothing\cite{chen2024magic} & 0.3652 &  0.7884  &  0.6137  &  0.4831  &  29.7541  &  22.0792 & 0.5872 \\
StableGarment\cite{wang2024stablegarment} & 0.3687  &  0.7537  &  0.5636  &  0.5465  & 57.6464  &  37.0746 & 0.5941 \\
Any2anyTryon\cite{guo2025any2anytryon} & 0.3269  & 0.7998  & 0.7083  & 0.4004  & 56.2823  & 35.8366 & 0.5407 \\
DreamFit\cite{lin2025dreamfit} & \underline{0.4087}  & 0.8253  & 0.7486  & \underline{0.2574}  &  \underline{14.6571} & \underline{6.5216} & \underline{0.5962} \\
IMAGDressing\cite{shen2025imagdressing} & 0.4049  & \underline{ 0.8302} &  \underline{0.7713} & 0.2848  & 16.1254  & 8.7264 & 0.5879  \\

\midrule

\textbf{OmniDiT} & \textbf{0.4336}  &  \textbf{0.8533}  &  \textbf{0.7758}  &  \textbf{0.2124}  &  \textbf{9.6294}  &  \textbf{3.0329}  &  \textbf{0.6103}   \\
\bottomrule
\end{tabular}
\end{table}

\begin{table}[ht]
\centering
\caption{Quantitative comparison on \textbf{Omni-TryOn} dataset for the \textbf{model-free try-on} task. The best and the second best results are denoted as \textbf{Bold} and \underline{underline}, respectively.}
\label{tab:tryon_wo_m_omnitryon}
\begin{tabular}{l|ccccccc}
\toprule
Method
 & DINO-I $\uparrow$ &  CLIP-I $\uparrow$ & SSIM $\uparrow$  &  LPIPS $\downarrow$ & FID $\downarrow$ & KID $\downarrow$ & FFA $\uparrow$\\
\midrule
MagicClothing\cite{chen2024magic} & 0.3721 & 0.7667 & 0.4847 & 0.5448 & 49.2383 & 20.7358 & 0.5514 \\
StableGarment\cite{wang2024stablegarment} & 0.3949 & 0.7551 & 0.4582 &0.5604 & 61.8785 & 22.3345 & 0.5695  \\
Any2anyTryon\cite{guo2025any2anytryon} & 0.3701 & \textbf{0.8014} & 0.5409 & 0.5185 & 84.2664 & 53.0184 & 0.5539  \\
DreamFit\cite{lin2025dreamfit} & 0.3777 & 0.7388 & 0.5205 & 0.4567 & 29.1108 & 6.6340 & 0.5539 \\
IMAGDressing\cite{shen2025imagdressing} &  0.3900 & \underline{0.7930} & 0.5361 & 0.5297 & 41.0636 & 15.2807 & 0.5615 \\

\midrule

DreamOmni2\cite{xia2025dreamomni2} & 0.3685 & 0.6963 & 0.4922 & 0.4769 & 38.7223 & 15.5396 & 0.5593 \\
 OminiControl\cite{song2025omniconsistency} & 0.3606 & 0.7324 & 0.5144 & 0.4844 & 31.4160 & 9.7545 & 0.5476 \\
DreamO\cite{mou2025dreamo} & 0.4092 & 0.7074 & 0.5075 & \underline{0.4553} & 30.3898 & 8.3570 & 0.6028\\
UNO\cite{wu2025less} & \underline{0.4100} & 0.7639 & \underline{0.5499} & 0.4716 & 29.7925 & 7.6892 & 0.5923 \\
OmniGen2\cite{wu2025omnigen2} & \textbf{0.4217} & 0.7315 & 0.5090 & 0.4705 & \underline{26.1271} & \underline{4.3034} & \underline{0.6120}  \\

\midrule

\textbf{OmniDiT} & 0.3822  &  0.7417  &  \textbf{0.5658}  &  \textbf{0.3878}  &  \textbf{22.7795}  &  \textbf{3.7496}  & \textbf{0.6180}  \\
\bottomrule
\end{tabular}
\end{table}

\begin{table}[h]
\centering
\caption{Quantitative comparison on \textbf{DressCode} dataset for the \textbf{try-off} task. The best and the second best results are denoted as \textbf{Bold} and \underline{underline}, respectively.}
\label{tab:tryoff_dresscode}
\begin{tabular}{l|ccccccc}
\toprule
Method
 & CLIP-I $\uparrow$ & MS-SSIM $\uparrow$ &  LPIPS $\downarrow$ & FID $\downarrow$ & CLIP-FID $\downarrow$ & KID $\downarrow$  & DISTS $\downarrow$ \\
\midrule
TryOffDiff\cite{velioglu2024tryoffdiff} & 0.9257   &  \textbf{0.7432}  &  \underline{0.1979}  &  11.7073  &  8.2309 & \underline{3.4271} & 0.2480   \\
MGT\cite{velioglu2025mgt} & \underline{0.9259}   &  \underline{0.7377}  &  0.2022  &  \underline{11.1108}  &  \underline{8.0411} & 3.6770 & \underline{0.2476}   \\
TryOffAnyone\cite{xarchakos2024tryoffanyone} & -   &  - &  - & -  &  - & - & -   \\
Any2anyTryon\cite{guo2025any2anytryon} &  0.8586  & 0.5572  & 0.3635  & 22.9081  & 10.6153 & 7.6255 & 0.3399  \\

\midrule

\textbf{OmniDiT} & \textbf{0.9365}  &  0.7052  &  \textbf{0.1746}  &  \textbf{7.5019}  &  \textbf{2.3383}  &  \textbf{1.2174}  &  \textbf{0.2013}   \\
\bottomrule
\end{tabular}
\end{table}

\begin{table}[h]
\centering
\caption{Quantitative comparison on \textbf{Omni-TryOn} dataset for the \textbf{try-off} task. The best and the second best results are denoted as \textbf{Bold} and \underline{underline}, respectively.}
\label{tab:tryoff_omnitryon}
\begin{tabular}{l|ccccccc}
\toprule
Method
 & CLIP-I $\uparrow$ & MS-SSIM $\uparrow$ &  LPIPS $\downarrow$ & FID $\downarrow$ & CLIP-FID $\downarrow$ & KID $\downarrow$  & DISTS $\downarrow$ \\
\midrule
TryOffDiff\cite{velioglu2024tryoffdiff} &  0.8606 & \underline{0.5745} & 0.3884 & 34.3007 & 9.7895 & 11.1588 & 0.3334   \\
MGT\cite{velioglu2025mgt} & 0.8722 &  0.5698 & 0.3798 & 28.0170 & 8.4188 & 7.9347 & 0.3127  \\
TryOffAnyone\cite{xarchakos2024tryoffanyone} & 0.8472 & 0.4658 & 0.4169 & 46.5044 & 12.4310 & 20.9535 & 0.3061  \\
Any2anyTryon\cite{guo2025any2anytryon} &  0.8102  & 0.4728 & 0.4366 & 68.1454 & 16.6143 & 36.3953 & 0.3406 \\

\midrule

DreamOmni2\cite{xia2025dreamomni2} & \underline{0.9107}  & 0.4888 & 0.3600 & \underline{23.2848} & \underline{4.3091} & \underline{5.2890} & \underline{0.2701} \\
OminiControl\cite{song2025omniconsistency} & 0.8853  & 0.4700 & 0.3796 & 25.6226 & 5.5035 & 5.9275 & 0.2858 \\
DreamO\cite{mou2025dreamo} &0.8755 & 0.4825 & \underline{0.3577} & 30.7476 & 9.6856 & 10.0566 & 0.2809 \\
UNO\cite{wu2025less} & 0.8709 & 0.5052 & 0.3834 & 33.7029 & 11.5271 & 11.3796 & 0.2880 \\
OmniGen2\cite{wu2025omnigen2} & 0.8894  & 0.4651 & 0.4008 & 27.5556 & 5.9572 & 6.5501 & 0.2899  \\

\midrule

\textbf{OmniDiT} &  \textbf{0.9315} &  \textbf{0.6225}  &  \textbf{0.2367}  &  \textbf{13.3311}  &  \textbf{2.1556}  &  \textbf{0.7115}  &  \textbf{0.2035}   \\
\bottomrule
\end{tabular}
\end{table}

\begin{figure}[h]
  \centering
  \includegraphics[
      height=6.5cm, 
      page=10, 
      trim=0.5cm 0.5cm 1cm 0cm, 
      clip               
  ]{framework.pdf}
  \caption{Additional model-based VTON showcases(1) in our dataset Omni-TryOn.
  }
  \label{fig:add_tryon_w_m_1}
\end{figure}

\begin{figure}[h]
  \centering
  \includegraphics[
      height=5.5cm, 
      page=11, 
      trim=0.5cm 4cm 1cm 0cm, 
      clip               
  ]{framework.pdf}
  \caption{Additional model-based VTON showcases(2) in our dataset Omni-TryOn.
  }
  \label{fig:add_tryon_w_m_2}
\end{figure}

\begin{figure}[h]
  \centering
  \includegraphics[
      height=6.5cm, 
      page=9, 
      trim=0.5cm 0.5cm 0.5cm 0cm, 
      clip               
  ]{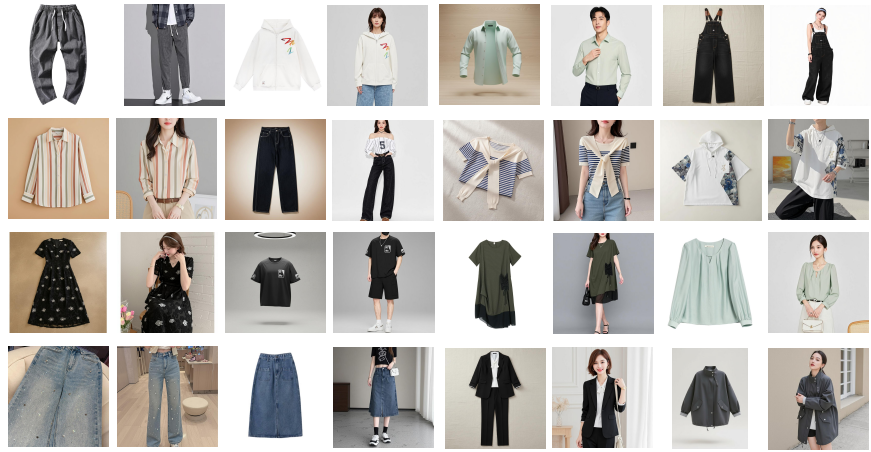}
  \caption{Additional model-free VTON showcases in our dataset Omni-TryOn.
  }
  \label{fig:add_tryon_wo_m}
\end{figure}

\begin{figure}[h]
  \centering
  \includegraphics[
      height=6.5cm, 
      page=8, 
      trim=0.5cm 0.5cm 0.2cm 0cm, 
      clip               
  ]{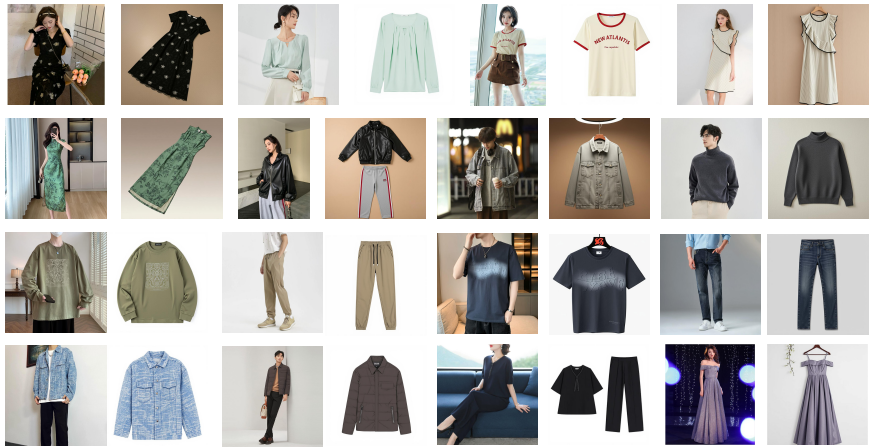}
  \caption{Additional VTOFF showcases in our dataset Omni-TryOn.
  }
  \label{fig:add_takeoff}
\end{figure}

\end{document}